\documentclass[lettersize,journal]{IEEEtran}
\usepackage{amsmath,amsfonts}
\usepackage{algorithmic}
\usepackage{algorithm}
\usepackage{array}
\usepackage[caption=false,font=normalsize,labelfont=sf,textfont=sf]{subfig}
\usepackage{textcomp}
\usepackage{stfloats}
\usepackage{url}
\usepackage{verbatim}
\usepackage{graphicx}
\usepackage{lineno}
\usepackage{cite}
\usepackage{multirow}
\usepackage{booktabs}  % 对应 \toprule, \midrule, \bottomrule
\usepackage{amsthm}          % 给出 proof 环境

\usepackage{amsmath}  % 提供数学公式支持 (通常都需要)
\usepackage{amsthm}   % 提供定理环境支持 (核心包)
\newtheorem{theorem}{Theorem}

\begin{document}
%1、Momentum-Driven Adaptive Weighting to Boost Heterogeneous Knowledge Distillation 2、Decoupling Space and Time with Momentum-Guided Adaptation for Heterogeneous Knowledge Distillation
%Stabilizing Heterogeneous Knowledge Distillation via Feature Geometry Decoupling and Dynamic Gradient Rectification

\title{Heterogeneous Knowledge Distillation via Geometry Decoupling and Momentum-Aware Gradient Regulation}

\author{Wuming~Yang,
        Xiang~Zhang, 
        and~Hongmin~Zhao* % 如果导师有 IEEE 会员身份，可以加上 ,~\IEEEmembership{Member,~IEEE}
        % <-this % stops a space
\thanks{W. Yang and H. Zhao are with the School of Computer and Mathematics, Central South University of Forestry and Technology, Changsha, Hunan, China (e-mail: yangwuming@csuft.edu.cn; zhm810@csuft.edu.cn).}% <-this % stops a space
\thanks{X. Zhang is with the College of Computer Science and Technology, and the Laboratory of Digitizing Software for Frontier Equipment, National University of Defense Technology, Changsha, Hunan, China (e-mail: zhangxiang08@nudt.edu.cn).}% <-this % stops a space
\thanks{*Corresponding author: Hongmin Zhao.}}

% The paper headers =IEEE Transactions on Neural Networks and Learning Systems
\markboth{}%
{Shell \MakeLowercase{\textit{et al.}}: A Sample Article Using IEEEtran.cls for IEEE Journals}

%\IEEEpubid{0000--0000/00\$00.00~\copyright~2021 IEEE}
% Remember, if you use this, you must call \IEEEpubidadjcol in the second
% column for its text to clear the IEEEpubid mark.

\maketitle

\begin{abstract}
Heterogeneous Knowledge Distillation (HKD) aims to transfer knowledge across varying architectures (e.g., from Transformer to CNN) but inherently suffers from severe training instability. We reveal that this instability stems from two highly coupled challenges: massive feature norm discrepancies that cause optimization drag, and severe gradient conflicts between the primary and distillation objectives arising from distinct inductive biases. To achieve stable distillation, we propose SPOFA, a framework built upon a novel Feature and Gradient Dual Stabilization mechanism. Specifically, at the feature level, we introduce a LayerNorm-based decoupling projector that explicitly decouples feature magnitude from direction, creating a bounded and stable space for semantic alignment. At the gradient level, we propose a momentum-driven Exponential Moving Average (MEMA) dynamic scaler. By establishing a robust historical baseline of the optimization trajectory, MEMA actively evaluates instantaneous gradient conflicts and adaptively penalizes harmful distillation signals, guaranteeing stable convergence. Importantly, SPOFA achieves this dual stabilization with an extremely lightweight parameter footprint. Extensive experiments on two mainstream benchmarks demonstrate that SPOFA achieves state-of-the-art accuracy, significantly outperforming computationally expensive methods while introducing only minimal computational overhead compared to standard baselines.
\end{abstract}

\begin{IEEEkeywords}
Heterogeneous Knowledge Distillation, Training Stabilization, Feature Decoupling, Gradient Conflict, Model Compression.
\end{IEEEkeywords}

%引言
\section{Introduction}
\label{sec:intro}

\IEEEPARstart{D}{eep} neural networks achieve groundbreaking success but often sacrifice accuracy to work on practical low-resource devices, such as mobile phones and edge computing nodes. Knowledge Distillation (KD) \cite{hinton2015distilling} offers an alternative solution by transferring knowledge from a cumbersome, high-capacity teacher to a lightweight student. Recently, KD has even been demonstrated as a robust solution to resolve structural missingness in highly compressed domains \cite{jing2026reflectance}. However, knowledge transfer becomes particularly challenging in heterogeneous settings (e.g., Transformer teacher to CNN student). Unlike homogeneous distillation, where feature topologies are inherently isomorphic, heterogeneous knowledge transfer suffers from severe training instability. This instability arises from fundamentally distinct architectural priors and inductive biases—for instance, the global self-attention mechanism in Transformers versus the local receptive fields in CNNs.

\begin{figure}[!t]
    \centering
    \includegraphics[width=\columnwidth]{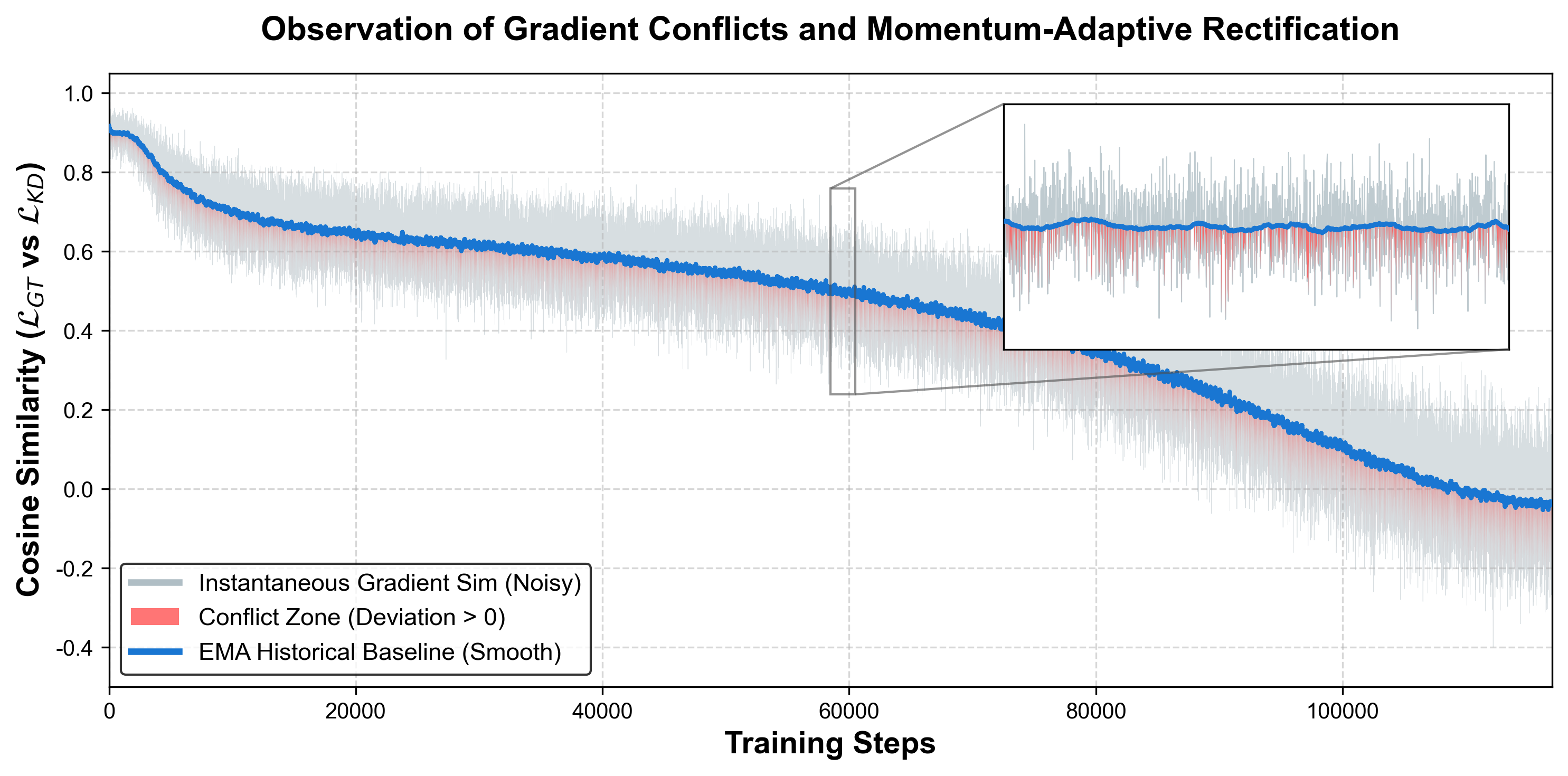}
    \caption{Observation of Gradient Conflicts and the SPOFA Rectification Mechanism. The highly volatile instantaneous cosine similarity (grey line) between the primary task ($\nabla \mathcal{L}_{GT}$) and distillation ($\nabla \mathcal{L}_{KD}$) gradients highlights the severe optimization noise inherent in heterogeneous knowledge distillation. Our proposed SPOFA framework introduces a Momentum-driven EMA baseline (blue line) to track the historical optimization trajectory. When severe gradient conflicts are detected—specifically, when the instantaneous similarity drops below the historical baseline (red zone)—SPOFA explicitly penalizes the conflicting distillation signals, thereby suppressing harmful distillation signals and encouraging a smoother optimization trajectory. The visualization is based on the Swin-T (Teacher) and ResNet-18 (Student) pair.}
    \label{fig:teaser_gradient_conflict}
\end{figure}

While existing approaches like OFA \cite{ofa2023neurips} and PAT \cite{lin2025perspective} attempt to bridge this representational gap via linear projection or heavy attention modules, they overlook the root causes of this training instability.  To investigate these fundamental issues, we conduct an empirical analysis on a highly representative heterogeneous pair: Swin-T (Teacher) to ResNet-18 (Student). This pair encapsulates a severe architectural clash between the global self-attention mechanism of Transformers and the local receptive fields of CNNs. By dissecting this specific combination, we reveal that the instability in Heterogeneous Knowledge Distillation (HKD) is fundamentally driven by two highly coupled challenges:  First, at the feature level (Structural Gap), architectures with distinct inductive biases exhibit substantial discrepancies in feature magnitudes (L2 Norms) \cite{raghu2021vision}. Specifically, the global receptive field of Swin-T intrinsically generates unbounded and massive feature magnitudes, whereas ResNet-18 expects normalized, localized variance. As illustrated in our preliminary analysis (detailed later in Section III-B), forcing the CNN student to mimic these unnormalized representations causes a severe \textit{representational bottleneck}. The network squanders its limited capacity on fitting absolute scalar norms rather than aligning the actual semantic directions. Second, at the gradient level (Optimization Gap), optimizing across these structural gaps inherently induces severe gradient conflicts. Because the CNN and Transformer dictate fundamentally different descent directions, the auxiliary distillation gradients ($\nabla \mathcal{L}_{KD}$) frequently oppose the primary task gradients ($\nabla \mathcal{L}_{GT}$) under heterogeneous noise (as explicitly depicted in Fig.~\ref{fig:teaser_gradient_conflict}). 
Existing methods typically treat this highly non-stationary process with static or memoryless dynamic weights, which exacerbates \textit{optimization instability} and—akin to the error amplification recently observed in uncalibrated collaborative learning \cite{zhou2026trust}—triggers catastrophic forgetting of the primary classification knowledge.

To mitigate these two coupled challenges, we propose SPOFA(Spatial Projector and Momentum-based Adaptive One-For-All), a robust framework built upon a novel Feature and Gradient Dual Stabilization paradigm. Rather than treating the symptoms with heavy attention modules or memoryless heuristics, SPOFA explicitly maps targeted interventions to the identified root causes: Specifically, to overcome the structural gap at the feature level, we introduce a minimalist LayerNorm-based decoupling projector. By explicitly isolating feature magnitude from direction optimization, this projector bounds the semantic space and alleviates the aforementioned representational bottlenecks.  Simultaneously, to resolve the optimization gap at the gradient level, we propose a Momentum-driven Exponential Moving Average (MEMA) dynamic scaler (as illustrated in Fig.~\ref{fig:teaser_gradient_conflict}). Instead of relying on volatile instantaneous metrics, MEMA establishes a robust historical baseline of the optimization trajectory. It actively probes gradient conflicts and adaptively penalizes harmful distillation signals, thereby improving optimization stability during heterogeneous distillation.

Our main contributions are summarized as follows:
\begin{itemize}
    \item We analyze heterogeneous knowledge distillation from a stability-oriented perspective and identify two coupled factors that are strongly associated with unstable training: feature-level norm discrepancy and optimization-level gradient conflict.
    \item We propose a feature geometry decoupling strategy implemented by a lightweight LayerNorm-based auxiliary projector. It reduces architecture-dependent magnitude interference before intermediate-logit distillation, encouraging the student to focus more on semantic directional alignment.
    \item We develop a Momentum-driven EMA (MEMA) conflict regulation mechanism. By maintaining a historical gradient-consistency baseline, MEMA adaptively suppresses harmful distillation signals when abnormal sample-wise conflicts are detected in the output logit space.
    \item Extensive experiments show that SPOFA mostly outperforms the strong OFA baseline and achieves state-of-the-art accuracy against computationally expensive methods (e.g., PAT) on ImageNet and CIFAR, while introducing only minimal computational overhead.
\end{itemize}

%相关工作部分内容
\section{Related Work}

\subsection{Heterogeneous Knowledge Distillation}
Homogeneous Knowledge Distillation typically assumes that the teacher and student models share isomorphic architectures (e.g., ResNet to ResNet) \cite{hinton2015distilling,zagoruyko2016paying}, allowing for straightforward feature alignment. However, modern deployment often requires compressing massive Transformer-based models into lightweight CNNs. Heterogeneous Knowledge Distillation (HKD) aims to bridge this severe architectural gap.

Early approaches focused on intermediate feature mimicry via basic projection layers \cite{romero2014fitnets} or relational topology \cite{park2019relational}. The One-for-All (OFA) framework \cite{ofa2023neurips} significantly advanced HKD by projecting intermediate features into a unified, architecture-agnostic logit space. Most recently, Perspective-Aware Teaching (PAT) \cite{lin2025perspective} introduced Region-Aware Attention (RAA) and Adaptive Feedback Prompts (AFP) to re-blend features across different inductive biases. Furthermore, the necessity of heterogeneous alignment has been echoed in complex paradigms such as masked image modeling \cite{wang2023heterogeneous}, data-free federated learning \cite{zhu2021data}, wearable human activity recognition \cite{xiao2025heterogeneous}, and asymmetric visual representation retrieval \cite{xie2024d3still}.

Despite improving semantic alignment, these methods encounter two critical bottlenecks. First, methods like PAT rely on heavy attention modules, resulting in prohibitive computational overhead (e.g., a $3\times$ increase in training latency compared to OFA \cite{lin2025perspective}). Second, and more importantly, they overlook the inherent \textit{training instability} of HKD. By ignoring the immense norm discrepancies \cite{raghu2021vision} and the volatile gradient conflicts across heterogeneous topologies, existing methods force the student into a suboptimal and chaotic optimization trajectory. SPOFA addresses this fundamental flaw through a lightweight, dual-stabilization paradigm.

\subsection{Feature-Level Alignment and Norm Decoupling}
Feature normalization techniques, such as BatchNorm and LayerNorm, are foundational to stabilizing deep neural network training by smoothing the optimization landscape. In the context of KD, bridging the capacity gap requires proper feature transformation. While some advanced methods explore complex feature whitening or correlation alignment \cite{ahn2024style} to decouple feature statistics, leverage decoupled contrastive objectives for cross-domain purification \cite{li2025decoupled}, or explicitly separate uncorrelated components in orthogonal spaces \cite{wang2025orthogonal, miles2024vkd} to avoid transferring ambiguous, coupled semantics \cite{wei2024scaled}, they incur massive computational costs in computing complex metrics. Consequently, mainstream HKD frameworks still predominantly rely on linear projectors or complex attention mechanisms. Crucially, these existing approaches merely perform feature transformation rather than true structural decoupling. As recently highlighted in heterogeneous knowledge transfers from CNNs to Vision Transformers \cite{chen2026distilling}, effectively bridging the representational gap requires explicitly addressing the underlying feature structures. For instance, OFA's linear projection and PAT's spatial blending \cite{lin2025perspective} still operate on magnitude-coupled features. When transferring knowledge between highly heterogeneous topologies, forcing the student to mimic absolute, unnormalized feature representations triggers severe representational bottlenecks—the student expends significant capacity trying to match the teacher's arbitrary feature magnitudes rather than learning its semantic directions. To this end, SPOFA introduces a minimalist LayerNorm-based decoupling projector that explicitly isolates magnitude from directional alignment, creating a bounded and stable space to eliminate feature-level optimization inefficiencies with near-zero overhead.

\subsection{Gradient-Level Stabilization and Momentum Dynamics}
Beyond spatial feature alignment, Knowledge Distillation is fundamentally a multi-objective optimization problem, requiring a delicate balance between the primary task gradient ($\nabla \mathcal{L}_{GT}$) and the distillation gradient ($\nabla \mathcal{L}_{KD}$). In Multi-Task Learning (MTL), gradient conflicts are actively addressed through projection methods like PCGrad \cite{yu2020gradient} and CAGrad \cite{liu2021conflict}. Recently, the community has increasingly recognized that similar gradient conflicts severely bottleneck Heterogeneous Knowledge Distillation. For instance, recent state-of-the-art studies, such as DTO-KD \cite{hayderdto}, explicitly identify that architectural mismatches inherently induce severe gradient conflicts between primary and distillation objectives. However, to resolve these conflicts, such methods typically rely on directly applying MTL projection algorithms, which require computing gradient similarities across the entire high-dimensional parameter space, rendering them computationally prohibitive for typical KD pipelines.

Alternatively, researchers have attempted to mitigate interference via dynamic weighting, heuristic loss scaling, adaptive teacher strategies that handle significant prediction discrepancies \cite{huang2025dist+}, or explicitly manipulating policy gradients to overcome misaligned objectives \cite{gu2025safe}. For instance, recent advances in robust model compression have demonstrated that timely knowledge correction integrated with dynamic weight adjustment can significantly stabilize learning trajectories under noisy or mismatched supervision \cite{lan2025continuous}. Furthermore, Momentum and Exponential Moving Average (EMA) techniques have been widely successful in self-supervised learning (e.g., MoCo) to provide stable optimization targets.

Despite these advances, existing dynamic weighting mechanisms in HKD rely almost exclusively on memoryless instantaneous heuristics. In heterogeneous pairs, because the student is constantly forced to map its representations onto an architecturally alien manifold, the batch-level gradients are inherently plagued by high variance and severe noise. Consequently, memoryless metrics frequently trigger chaotic weighting adjustments, exacerbating gradient conflicts and causing optimization instability. SPOFA posits that leveraging a historical baseline is key to stabilizing this chaotic process. By designing the MEMA dynamic scaler, we establish a robust optimization trajectory, adaptively penalizing $\nabla \mathcal{L}_{KD}$ only when instantaneous gradient conflicts diverge from the historical expectation, thereby encouraging more stable optimization dynamics during heterogeneous distillation.

%方法部分内容
\section{Method}

In this section, we present SPOFA, a robust framework designed to resolve the training instability inherent in heterogeneous Knowledge Distillation (HKD). As illustrated in Fig.~\ref{fig:overall_framework}, our framework tackles the heterogeneous gap from two orthogonal dimensions: Feature-Level Stabilization and Gradient-Level Stabilization. 
We first revisit the OFA baseline to formulate the distillation objective and identify its inherent flaws (Sec. \ref{sec:preliminaries}). We then introduce our LayerNorm-based structural decoupling mechanism to eliminate feature-level representational bottlenecks (Sec. \ref{sec:feature_level}). Next, we formulate the EMA-Guided Dynamic Scaler (MEMA) with a novel per-sample gradient probing mechanism to rectify optimization trajectories at the gradient level, supported by a sufficient-condition analysis (Sec. \ref{sec:gradient_level}). Finally, we summarize the overall dual-stabilization framework and training algorithm (Sec. \ref{sec:overall}).

% --- 插入主框架大图 ---
\begin{figure*}[t]
    \centering
    \includegraphics[width=\textwidth]{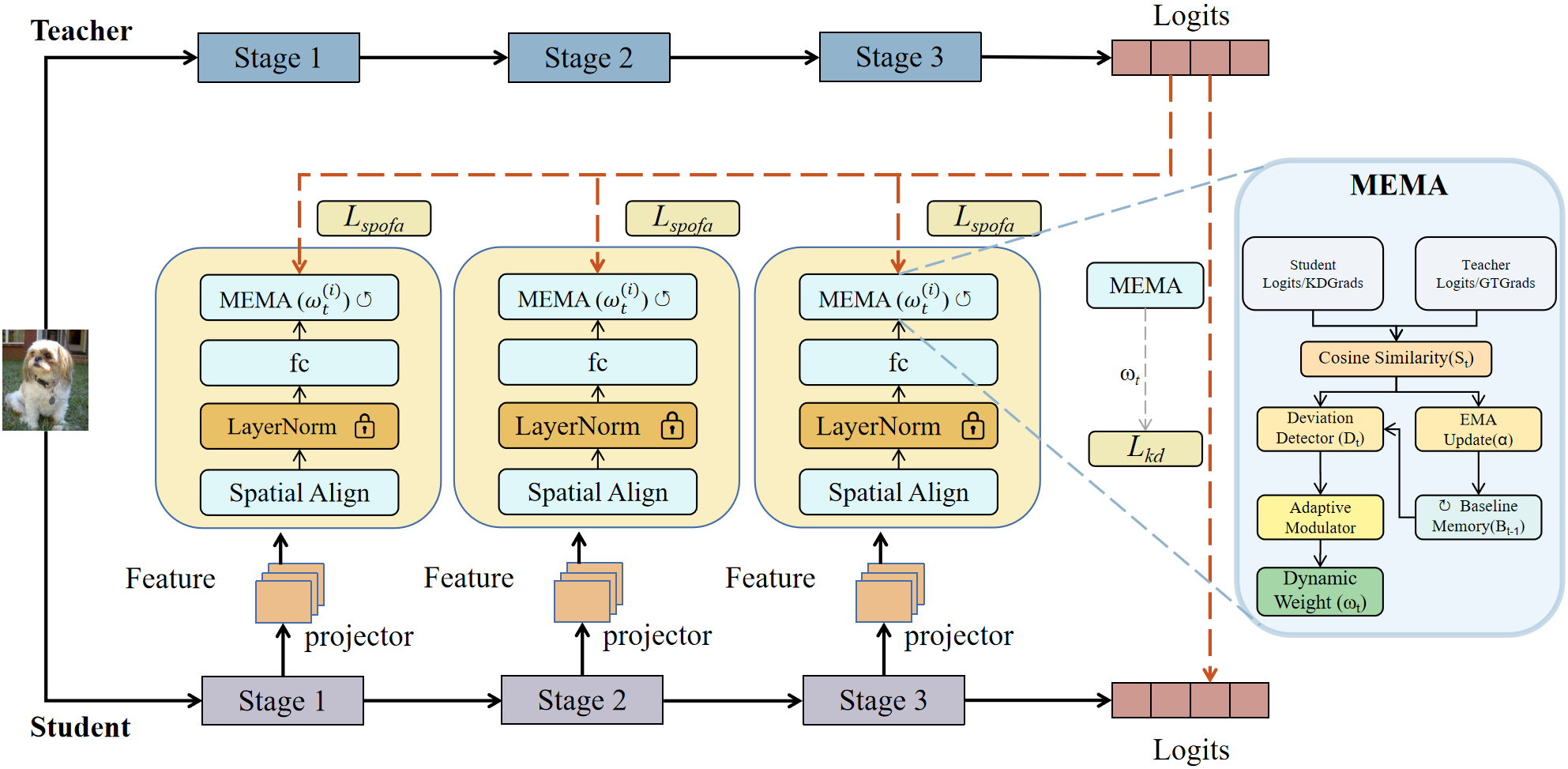} 
    \caption{The overall architecture of the proposed SPOFA framework. \textbf{Left:} The multi-level heterogeneous distillation pipeline from a teacher to a student. \textbf{Middle :} The intermediate distillation blocks, each integrating a LayerNorm-based decoupling projector (comprising \textit{Spatial Align}, LayerNorm, and \textit{fc} layers) to reduce architecture-dependent magnitude interference and alleviate representational bottlenecks, accompanied by a Stage MEMA ($\omega_t^{(i)}$) for dynamic local loss scaling. \textbf{Right :} The inner workings of the Momentum-driven EMA (MEMA) scaler serve as a universal dynamic module. When applied to intermediate layers, it targets projected features with both reward and penalty mechanisms; when applied to the output layer, it targets gradient vectors and exclusively employs a penalty mechanism via one-sided truncation to regulate harmful distillation signals under gradient conflicts. Note: Only three stages are shown for convenience. In our experiments, all models are split into four stages.}
    \label{fig:overall_framework}
\end{figure*}
% ---------------------

\subsection{Preliminaries: The OFA Baseline and its Flaws}
\label{sec:preliminaries}
Consider a pre-trained teacher network $T$ and a student network $S$ with heterogeneous architectures (e.g., Transformer vs. CNN). Let $X \in \mathbb{R}^{B \times C \times H \times W}$ denote the input data and $Y$ denote the corresponding ground-truth labels. The intermediate features extracted from the $i$-th selected stage of the teacher and student are denoted as $F_i^T \in \mathbb{R}^{C_i^T \times H_i^T \times W_i^T}$ and $F_i^S \in \mathbb{R}^{C_i^S \times H_i^S \times W_i^S}$, respectively. 

Due to the fundamental architectural gap, directly matching $F_i^T$ and $F_i^S$ is infeasible. The One-for-All (OFA) framework \cite{ofa2023neurips} tackles this by introducing an auxiliary projector $P_{OFA}(\cdot)$ that maps the intermediate features of the student into a unified, architecture-agnostic logits space:
\begin{equation} \label{eq:projector}
    Z_i^S = P_{OFA}(F_i^S) = W_{cls}^{(i)} \cdot \phi(F_i^S)
\end{equation}
where $\phi(\cdot)$ denotes a lightweight spatial alignment operation and $W_{cls}^{(i)}$ is the classifier weight. 

Instead of using standard KL-divergence, OFA introduces an adaptive target enhancement mechanism. Let $p^S = \sigma(Z_i^S / \tau)$ and $p^T = \sigma(Z^T / \tau)$ be the softened probabilities scaled by temperature $\tau$, and let $y$ be the one-hot ground-truth label. The intermediate distillation loss $\mathcal{L}_{OFA}^{(i)}$ is formulated as:
\begin{equation} \label{eq:ofa_loss}
    \mathcal{L}_{OFA}^{(i)} = \sum_{c=1}^{C} - \left( (p_{c}^T + y_{c})^{\epsilon} - y_{c} \right) \log(p_{c}^S)
\end{equation}
where $\epsilon \geq 1$ is a modulating parameter.

While OFA successfully bypasses dimensional mismatch, it exposes two severe underlying flaws that induce training instability: (1) \textit{Feature-Level Flaw:} Linear projection and basic activations preserve absolute feature magnitudes, failing to address the massive norm discrepancy between heterogeneous architectures. (2) \textit{Gradient-Level Flaw:} The constant or memoryless stage-wise weighting of $\mathcal{L}_{OFA}$ ignores the highly volatile gradient conflicts between the primary task and the distillation process, inevitably leading to optimization noise and trajectory oscillations.

\subsection{Feature-Level Alignment: Breaking Representational Bottlenecks via Norm Decoupling}
\label{sec:feature_level}

In heterogeneous distillation, models with distinct inductive biases operate on vastly different feature scales. As demonstrated in Fig.~\ref{fig:feature_analysis} (Left), directly forcing a CNN student to mimic the feature space of a ViT teacher results in severe Norm Discrepancy. Standard projectors entangle the optimization of feature magnitude (L2 Norm) and feature direction (Cosine Similarity). Consequently, the student squanders its limited representational capacity, struggling to scale its absolute feature values, leading to severe representational bottlenecks.

\begin{figure}[t]
    \centering
    \includegraphics[width=\linewidth]{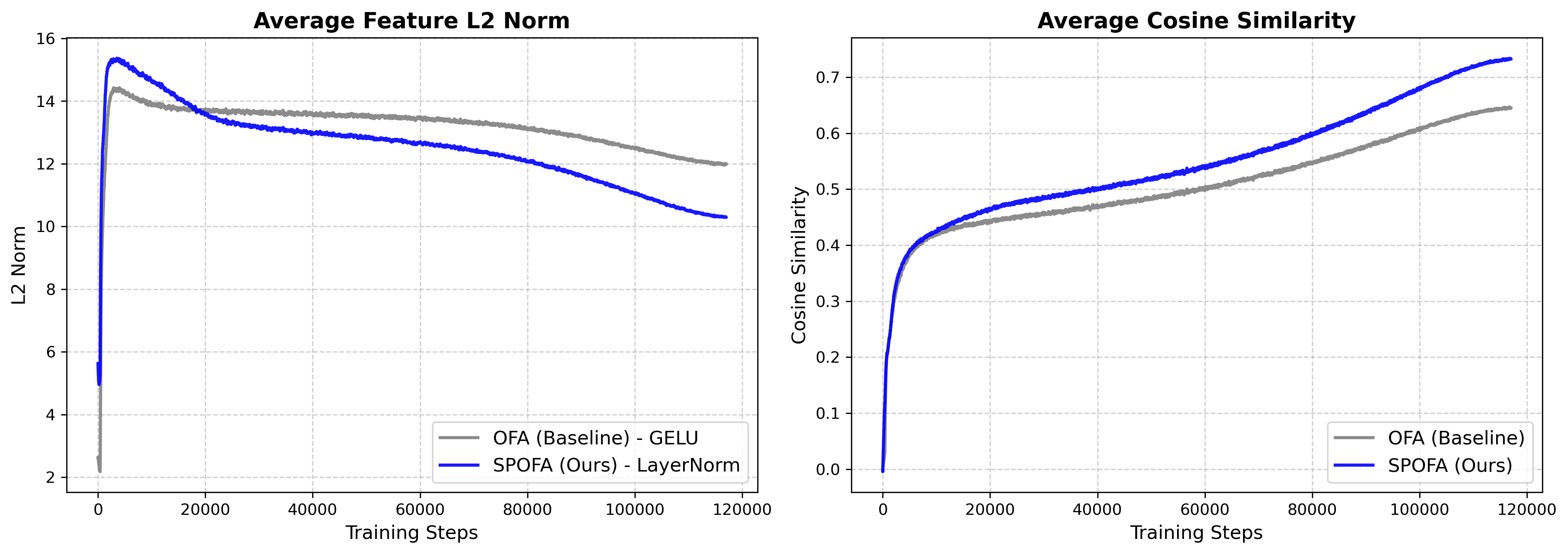}
    \caption{Comparative analysis of feature-level dynamics during heterogeneous distillation. The visualization is based on the \textbf{Swin-T (Teacher)} and \textbf{ResNet18 (Student)} pair. By replacing standard linear projectors with a LayerNorm-based structural decoupling mechanism, SPOFA reduces the immense feature magnitude (Left) and alleviates representational bottlenecks, thereby achieving a significantly higher semantic alignment (Right) compared to the OFA baseline.}
    \label{fig:feature_analysis}
\end{figure}

To structurally decouple magnitude from direction—a principle recently highlighted as crucial for regularizing generalizable knowledge transfer \cite{yang2025toward}—SPOFA introduces a minimalist yet highly effective LayerNorm-based Projector, denoted as $P_{SPOFA}(\cdot)$. We embed Layer Normalization (LN) deeply into the auxiliary branch before the final classification projection:

\begin{equation} \label{eq:layernorm}
    \hat{F}_i^S = \text{LayerNorm}(\phi(F_i^S)) = \frac{\phi(F_i^S) - \mu}{\sqrt{\sigma^2 + \epsilon_{LN}}} \odot \gamma + \beta
\end{equation}

where $\phi(\cdot)$ denotes a spatial pooling operation (e.g., global average pooling) that aggregates the spatial dimensions to yield a 1D channel-wise feature vector in $\mathbb{R}^C$; $\mu$ and $\sigma^2$ are the mean and variance computed across the channel dimension, $\epsilon_{LN}$ is a small constant for numerical stability, and $\gamma, \beta$ are learnable affine parameters. It is worth noting that LN is only inserted into the auxiliary distillation branch and is discarded after training together with the projector. Therefore, it does not alter the deployed student architecture. Its role is to reduce architecture-dependent magnitude bias in the distillation space rather than to normalize the student backbone itself.

The SPOFA auxiliary logits are computed as $Z_i^{SPOFA} = W_{cls}^{(i)} \cdot \hat{F}_i^S$. By applying LayerNorm, the feature magnitude is explicitly decoupled from the student's architectural topology. Since the teacher is frozen with constant feature magnitudes, the student only needs to align its relative logit distribution, not the absolute scale. Crucially, when paired with the temperature-scaled Softmax in the distillation loss---which inherently relies on relative probability distributions rather than absolute logit magnitudes---this single-sided structural decoupling is sufficient to forcefully cut off the gradient pathways responsible for arbitrary magnitude scaling. As a result, the optimizer is strongly encouraged to shift its focus towards maximizing the angular consistency between the student and teacher representations. This successfully translates into a significantly higher semantic alignment, as directly validated by the Cosine Similarity improvements in Fig.~\ref{fig:feature_analysis} (Right).

\subsection{Gradient-Level Stabilization: Rectifying Trajectories via MEMA}
\label{sec:gradient_level}
While the feature-level stabilization alleviates representational bottlenecks, gradient-level conflicts persist and must be addressed separately. Even with perfectly aligned features, Knowledge Distillation remains a multi-objective optimization problem. The gradients derived from the ground-truth loss ($\nabla_{\theta} \mathcal{L}_{GT}$) and the distillation loss ($\nabla_{\theta} \mathcal{L}_{KD}$) frequently point in opposing directions. As highlighted by recent advancements in top-tier vision conferences, such unmitigated gradient conflicts can severely trap parameter updates in sub-optimal solutions \cite{patel2025learning}, necessitating dynamically balanced loss weighting \cite{sun2025two}. Previous dynamic weighting methods rely on memoryless instantaneous heuristics. However, due to the high variance of heterogeneous mini-batches, such memoryless approaches trigger chaotic weighting oscillations, exacerbating optimization instability.

To resolve this, SPOFA abandons memoryless heuristics and proposes a Momentum-driven EMA (MEMA) Scaler featuring a fine-grained \textit{per-sample} probing mechanism.

First, at training step $t$, to rigorously monitor optimization dynamics while maintaining a strictly lightweight footprint, we probe gradient conflicts in the \textbf{output logit space}. Inspired by recent findings that sample-wise logit constraints effectively preserve structural semantics \cite{zhu2025ckd}, let $Z^S$ denote the logit bottleneck. We compute the instantaneous Gradient Similarity $S_{t, j}$ specifically for the $j$-th sample in a mini-batch of size $N$:
\begin{equation} \label{eq:sim}
    S_{t, j} = \frac{\langle \nabla_{Z^S} \mathcal{L}_{GT}^{(t, j)}, \nabla_{Z^S} \mathcal{L}_{KD\_Logit}^{(t, j)} \rangle}{\| \nabla_{Z^S} \mathcal{L}_{GT}^{(t, j)} \|_2 \cdot \| \nabla_{Z^S} \mathcal{L}_{KD\_Logit}^{(t, j)} \|_2}
\end{equation}
By computing per-sample gradients in the low-dimensional output logit space rather than across the entire parameter manifold ($\theta$), SPOFA avoids expensive full-parameter gradient comparison while retaining a lightweight conflict indicator. Although logit-level gradient similarity is not equivalent to parameter-level gradient similarity, it provides a computationally tractable proxy for detecting potential conflicts between the primary and distillation objectives. According to the chain rule, the corresponding parameter-level gradients are induced by the logit-level gradients through the shared network Jacobian $\partial Z^S / \partial \theta$. Thus, a strong directional discrepancy in the logit space can indicate potentially harmful optimization interference and can be used to guide adaptive distillation weighting.

\textit{Theoretical Motivation:} Relying solely on instantaneous similarity $S_{t, j}$ for loss scaling is inherently susceptible to mini-batch noise. Inspired by momentum-based stabilization in deep learning optimizers, we establish a robust historical baseline. 
Instead of directly using $S_{t, j}$, we compute a historical baseline $B_t$ of the optimization trajectory using an Exponential Moving Average (EMA) over the batch-averaged similarity:
\begin{equation} \label{eq:ema}
    B_t = \alpha \cdot B_{t-1} + (1 - \alpha) \cdot \frac{1}{N}\sum_{j=1}^{N} S_{t, j}
\end{equation}
where $\alpha \in [0, 1)$ is the momentum coefficient. A higher $\alpha$ dictates a deeper memory, shielding the distillation process from being derailed by transient sample-level outliers.

To guarantee stability, SPOFA adopts a \textit{punish-only} mechanism. We utilize the baseline from the previous step $B_{t-1}$ as the prior expectation. The Deviation $D_{t, j}$ is calculated by comparing the instantaneous similarity against this historical expectation:
\begin{equation} \label{eq:dev}
    D_{t, j} = \max(0, B_{t-1} - S_{t, j})
\end{equation}
This proactive feedback loop structurally distinguishes between "expected architectural conflict" (captured by $B_{t-1}$) and "abnormal gradient interference" (the instantaneous drop). Only when the current conflict is more severe than the historical consensus ($S_{t,j} < B_{t-1}$) does the mechanism trigger a weight decay.

Finally, the dynamic distillation weight $\omega_{t, j}^{out}$ is computed using a bounded linear decay:
\begin{equation} \label{eq:weight}
    \omega_{t, j}^{out} = \operatorname{Clamp}\left( 1.0 - \kappa \cdot D_{t, j}, \; \omega_{min}, \; \omega_{max} \right)
\end{equation}
where $\kappa$ governs the decay sensitivity. When an abnormal conflict is detected ($D_{t, j} > 0$), $\omega_{t, j}^{out}$ decays proportionally. Conversely, expected gradient directions ($D_{t, j} = 0$) retain the base weight of $1.0$.

\textbf{Theoretical Motivation of MEMA.} To provide intuition for why suppressing the distillation contribution under severe gradient conflict can improve primary-task consistency, we analyze a sufficient condition under which the joint update remains consistent with the primary task gradient.

\begin{theorem}[A Sufficient Condition for Primary-Task Gradient Consistency]\label{thm:descent}
Let $g_{gt}^{(t,j)} = \nabla_{Z^S} \mathcal{L}_{GT}^{(t,j)}$ and $g_{kd}^{(t,j)} = \nabla_{Z^S} \mathcal{L}_{KD\_Logit}^{(t,j)}$. Assuming the primary task gradient is non-zero (i.e., $\|g_{gt}^{(t,j)}\|_2 > 0$), the joint optimization step evaluated by $g_{total}^{(t,j)} = g_{gt}^{(t,j)} + \omega_{t,j}^{out} \cdot g_{kd}^{(t,j)}$ forms a strict descent direction for the primary task (i.e., $\langle g_{total}^{(t,j)}, g_{gt}^{(t,j)} \rangle > 0$) if the dynamic weight $\omega_{t,j}^{out}$ satisfies:
\begin{equation}\label{eq:omega_bound}
    \omega_{t,j}^{out} < \frac{\|g_{gt}^{(t,j)}\|_2}{|S_{t,j}| \cdot \|g_{kd}^{(t,j)}\|_2}, \quad \text{when } S_{t,j} < 0.
\end{equation}
When $S_{t,j} \geq 0$, the descent condition inherently holds for any $\omega_{t,j}^{out} \geq 0$.
\end{theorem}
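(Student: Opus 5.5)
The plan is a direct computation: expand the inner product by bilinearity and rewrite the cross term through the cosine similarity $S_{t,j}$ of Eq.~\eqref{eq:sim}. Dropping the indices $(t,j)$ for readability, write
\begin{equation*}
\langle g_{total}, g_{gt}\rangle = \|g_{gt}\|_2^2 + \omega^{out}\,\langle g_{kd}, g_{gt}\rangle = \|g_{gt}\|_2^2 + \omega^{out}\, S\, \|g_{gt}\|_2\,\|g_{kd}\|_2 .
\end{equation*}
This substitution needs $\|g_{kd}\|_2>0$, since otherwise $S$ is undefined. If $g_{kd}=0$, then $g_{total}=g_{gt}$ and the inner product is $\|g_{gt}\|_2^2>0$ by the standing assumption, so this degenerate case can be settled first and set aside. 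After that we may assume both gradients are nonzero.

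I will then factor out $\|g_{gt}\|_2>0$, which is permitted by the non-zero assumption. The descent condition $\langle g_{total}, g_{gt}\rangle>0$ becomes equivalent to
\begin{equation*}
\|g_{gt}\|_2 + \omega^{out} S \|g_{kd}\|_2 > 0 .
\end{equation*}
The argument now splits on the sign of $S$.

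\emph{Case $S\ge 0$.} For any $\omega^{out}\ge 0$ the second term is nonnegative and the first is strictly positive, so the inequality holds. This is exactly the final claim of the theorem. It also applies to the actual MEMA weight, because the clamp in Eq.~\eqref{eq:weight} keeps $\omega^{out}\in[\omega_{min},\omega_{max}]$, and I will note that $\omega_{min}\ge 0$ is implicitly assumed there.

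\emph{Case $S<0$.} Write $S=-|S|$. The condition becomes $\omega^{out}|S|\,\|g_{kd}\|_2<\|g_{gt}\|_2$. Dividing by the strictly positive quantity $|S|\,\|g_{kd}\|_2$ gives precisely the bound~\eqref{eq:omega_bound}. Hence that bound is sufficient, and in fact also necessary for this step.

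There is no real obstacle here. The only points needing care are the sign bookkeeping when dividing by $|S|$, and the degenerate case $\|g_{kd}\|_2=0$, where $S$ is undefined and must be handled separately rather than through Eq.~\eqref{eq:sim}.

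Finally, I will state explicitly the scope of the result. It concerns descent for the primary objective with respect to the logits $Z^S$. It does not directly give parameter-level descent: applying the chain rule through the shared Jacobian, as remarked before the theorem, only transports this consistency heuristically, because $J^\top$ does not preserve inner products in general.
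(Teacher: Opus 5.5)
Your proposal is correct and follows the same route as the paper: expand $\langle g_{total}, g_{gt}\rangle$ by bilinearity, rewrite the cross term through $S_{t,j}$, divide by $\|g_{gt}^{(t,j)}\|_2>0$, and solve for $\omega_{t,j}^{out}$ when $S_{t,j}<0$. Your separate treatment of the degenerate case $g_{kd}^{(t,j)}=0$, where $S_{t,j}$ is undefined, and your explicit check of the $S_{t,j}\ge 0$ claim fill in small points that the paper's proof leaves implicit.
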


\noindent\textit{Proof.} To strictly ensure that the auxiliary distillation process does not degrade the primary classification task, the inner product must be positive:
\begin{equation}
    \langle g_{gt}^{(t,j)} + \omega_{t,j}^{out} \cdot g_{kd}^{(t,j)}, g_{gt}^{(t,j)} \rangle = \|g_{gt}^{(t,j)}\|_2^2 + \omega_{t,j}^{out} \langle g_{kd}^{(t,j)}, g_{gt}^{(t,j)} \rangle > 0
\end{equation}
Using the definition from Eq.~\ref{eq:sim}, we have $\langle g_{kd}^{(t,j)}, g_{gt}^{(t,j)} \rangle = S_{t,j} \|g_{gt}^{(t,j)}\|_2 \|g_{kd}^{(t,j)}\|_2$. Dividing the inequality by $\|g_{gt}^{(t,j)}\|_2 > 0$ yields:
\begin{equation}
\label{eq:descent_condition}
    \|g_{gt}^{(t,j)}\|_2 + \omega_{t,j}^{out} \cdot S_{t,j} \cdot \|g_{kd}^{(t,j)}\|_2 > 0
\end{equation}
When $S_{t,j} < 0$ (heterogeneous gradient conflict), solving for $\omega_{t,j}^{out}$ establishes a strict mathematically bounded upper limit:
\begin{equation}
\label{eq:omega_bound_proof}
    \omega_{t,j}^{out} < \frac{\|g_{gt}^{(t,j)}\|_2}{|S_{t,j}| \cdot \|g_{kd}^{(t,j)}\|_2}
\end{equation}
\hfill $\square$

\vspace{0.5em}
\noindent{Remark 1 (Bridging Theorem 1 and MEMA).} \label{rmk:bridge}
In the standard OFA framework with static weighting ($\omega \equiv 1$), severe heterogeneous conflicts (large $|S_{t,j}|$) inevitably violate the upper bound in Eq.~\ref{eq:omega_bound}. While stochastic mini-batch training prevents immediate collapse, these continuous bound violations inject severe optimization noise, trapping the student in sub-optimal local minima. 

Directly calculating and applying the exact theoretical bound $\frac{\|g_{gt}^{(t,j)}\|_2}{|S_{t,j}| \cdot \|g_{kd}^{(t,j)}\|_2}$ is practically infeasible due to extreme mini-batch volatility. MEMA provides a robust, history-aware approximation to satisfy this theorem. As the conflict deepens (i.e., $S_{t,j}$ becomes highly negative), the deviation $D_{t, j} = B_{t-1} - S_{t, j}$ proportionally increases. According to Eq.~\ref{eq:weight}, this immediately enforces a severe penalty on $\omega_{t,j}^{out}$, smoothly compressing the distillation influence. By dynamically throttling $\omega_{t,j}^{out}$ inversely proportional to the conflict severity, MEMA acts as a structural safeguard that strongly encourages compliance with the theoretical bound, thereby facilitating a stable and superior descent trajectory, as validated empirically in our subsequent experiments.

\noindent{Remark 2 (Computational Efficiency of Logit-Level Probing).} \label{rmk:complexity}
Calculating per-sample gradient similarities across the entire parameter space ($\theta \in \mathbb{R}^d$) would incur a prohibitive computational complexity of $\mathcal{O}(N \cdot d)$ per step. By structurally evaluating the gradients strictly at the logit bottleneck ($Z^S \in \mathbb{R}^C$), SPOFA explicitly reduces this complexity to $\mathcal{O}(N \cdot C)$. Because the number of categories $C$ is orders of magnitude smaller than the total number of parameters $d$ (e.g., $C \ll d$), the relative computational overhead is strictly bounded by $\mathcal{O}(C/d)$. Consequently, the proposed gradient probing mechanism introduces negligible additional cost, seamlessly integrating into the standard training pipeline.

\subsection{The Dual-Stabilization SPOFA Framework}
\label{sec:overall}
To comprehensively guide heterogeneous distillation, SPOFA explicitly formulates dynamic distillation objectives at two distinct levels, perfectly bridging the feature-level and gradient-level flaws identified previously—a dual-level philosophy that has recently shown great promise in mitigating architectural reliance in other distillation domains \cite{wu2024teacher}.

First, at the output level, the gradient-stabilized distillation loss utilizes the per-sample MEMA weights to rectify optimization trajectories dynamically:
\begin{equation} \label{eq:kd_out}
    \mathcal{L}_{Out}^{(t)} = \frac{1}{N} \sum_{j=1}^{N} \left( \omega_{t, j}^{out} \cdot \mathcal{L}_{KD\_Logit}^{(t, j)} \right)
\end{equation}
where $N$ denotes the mini-batch size, $\omega_{t, j}^{out}$ is the dynamic distillation weight for the $j$-th sample derived from the MEMA scaler (as defined in Eq.~\ref{eq:weight}), and $\mathcal{L}_{KD\_Logit}^{(t, j)}$ represents the standard output logit distillation loss for that specific sample at training step $t$.

Simultaneously, to synchronize multi-layer representation learning, we maintain $K$ independent EMA scalers for the intermediate auxiliary branches. Unlike the output logit space which probes gradient conflicts, the intermediate dynamic weights $\omega_t^{(i)}$ are evaluated based on the batch-averaged feature cosine similarities at stage $i$. The intermediate-level (feature-stabilized) distillation loss is aggregated as:
\begin{equation} \label{eq:kd_inter}
    \mathcal{L}_{Inter}^{(t)} = \sum_{i=1}^{K} \left( \omega_t^{(i)} \cdot \mathcal{L}_{SPOFA}^{(i, t)} \right)
\end{equation}
where $K$ is the total number of intermediate stages, $\omega_t^{(i)}$ serves as the dynamic feature-level weight for the $i$-th stage, and $\mathcal{L}_{SPOFA}^{(i, t)}$ denotes the LayerNorm-decoupled intermediate distillation loss (as defined in Sec.~\ref{sec:feature_level}).

Finally, the overall objective function of SPOFA at training step $t$ explicitly unifies the primary ground-truth classification task with the dual-stabilized distillation components:
\begin{equation} \label{eq:total_loss}
    \mathcal{L}_{Total}^{(t)} = \lambda_{gt} \mathcal{L}_{GT}^{(t)} + \lambda_{out} \mathcal{L}_{Out}^{(t)} + \lambda_{inter} \mathcal{L}_{Inter}^{(t)}
\end{equation}
where $\mathcal{L}_{GT}^{(t)}$ is the primary task loss (e.g., Cross-Entropy) evaluated on the ground-truth labels. The terms $\lambda_{gt}$, $\lambda_{out}$, and $\lambda_{inter}$ are global static hyperparameters introduced to balance the baseline magnitudes of the respective optimization targets. The complete training progression is summarized in Algorithm \ref{alg:spofa}.

\begin{algorithm}[htbp]
\caption{Training Pipeline of the SPOFA Framework}
\label{alg:spofa}
Input: Teacher $T$, Student $S$, Dataset $\mathcal{D}$, Momentum $\alpha$, Sensitivity $\kappa$, Global weights $\lambda_{gt}, \lambda_{out}, \lambda_{inter}$, Learning rate $\eta$. \\
Initialize: Student parameters $\theta$, EMA Baselines $B_0, \{B_0^{(i)}\}_{i=1}^K$ using first-step metrics.
\begin{algorithmic}
\FOR{training step $t = 1, \dots, T_{max}$}
    \STATE Forward pass mini-batch $(x, y)$ to obtain logits $Z^T, Z^S$, features $\{F_i^T, F_i^S\}_{i=1}^K$, and $\mathcal{L}_{GT}^{(t)}$.
    
    \STATE // Feature-Level Stabilization (LayerNorm \& Stage EMA):
    \FOR{stage $i = 1, \dots, K$}
        \STATE Decouple features to $\hat{F}_i^S$ (Eq.~\ref{eq:layernorm}) and evaluate batch-averaged similarity $S_t^{(i)}$.
        \STATE Compute dynamic weight $\omega_t^{(i)}$ via deviation $B_{t-1}^{(i)} - S_t^{(i)}$, and update baseline $B_t^{(i)}$ (Eq.~\ref{eq:ema}).
    \ENDFOR
    \STATE Aggregate intermediate loss $\mathcal{L}_{Inter}^{(t)} = \sum_{i=1}^K \omega_t^{(i)} \mathcal{L}_{SPOFA}^{(i, t)}$ (Eq.~\ref{eq:kd_inter}).
    
    \STATE  // Gradient-Level Stabilization (Probing \& MEMA):
    \STATE Probing: Compute sample-wise gradients $\nabla_{Z^S} \mathcal{L}_{GT}^{(t,j)}, \nabla_{Z^S} \mathcal{L}_{KD\_Logit}^{(t,j)}$ to get per-sample similarities $\{S_{t, j}\}_{j=1}^N$ (Eq.~\ref{eq:sim}).
    \STATE Evaluate deviation $D_{t, j} \leftarrow \max(0, B_{t-1} - S_{t, j})$ to update dynamic weights $\omega_{t, j}^{out}$ (Eq.~\ref{eq:weight}).
    \STATE Update global baseline $B_t$ (Eq.~\ref{eq:ema}) and compute output loss $\mathcal{L}_{Out}^{(t)}$ (Eq.~\ref{eq:kd_out}).
    
    \STATE // Overall Optimization:
    \STATE Compute $\mathcal{L}_{Total}^{(t)} \leftarrow \lambda_{gt} \mathcal{L}_{GT}^{(t)} + \lambda_{out} \mathcal{L}_{Out}^{(t)} + \lambda_{inter} \mathcal{L}_{Inter}^{(t)}$.
    \STATE Update student parameters: $\theta \leftarrow \theta - \eta \nabla_{\theta} \mathcal{L}_{Total}^{(t)}$.
\ENDFOR
\end{algorithmic}
\end{algorithm}

%实验部分的内容
\section{Experimental Results}
We systematically and comprehensively evaluate the proposed SPOFA framework on two mainstream image classification benchmarks: CIFAR-100 and ImageNet-1K.

% =========================================================
\subsection{Experimental Settings}

\subsubsection{Datasets.}
CIFAR-100~\cite{krizhevsky2009learning} contains 100 categories, each with an image resolution of $32 \times 32$. The training set consists of 50,000 images, and the test set contains 10,000 images. It is a small-scale yet moderately difficult benchmark, widely used to evaluate model generalization and knowledge transfer under limited data scenarios. 
ImageNet-1K~\cite{deng2009imagenet} includes 1,000 generic categories with about 1.2 million training images and 50,000 validation images, covering a wide range of classes with fine-grained granularity. It serves as an authoritative and highly challenging benchmark for visual recognition performance.
Conducting experiments on both datasets allows us to examine SPOFA's behavior in both small- and large-scale scenarios, thereby comprehensively assessing its stability, efficiency, and generality. To ensure consistency with current visual recognition practices, we strictly follow the standard settings used in recent large-scale training and distillation works~\cite{ofa2023neurips,liu2022swinv2}.

\subsubsection{Model Architectures.}
To verify the applicability and robustness of SPOFA across severe heterogeneous gaps, we experiment with multiple teacher-student pairs encompassing three mainstream paradigms:
(1) Convolutional Neural Networks (CNNs): including ResNet, MobileNetV2, and ConvNeXt, representing the evolution from classic to modern CNN designs~\cite{convnext2022}.
(2) Transformer-based Models: such as ViT, DeiT, and Swin Transformer, which have achieved strong performance in vision tasks but differ fundamentally from CNNs in inductive biases and spatial feature scales~\cite{khan2022vitsurvey,liu2022swinv2}.
(3) MLP-based Architectures: such as MLP-Mixer and ResMLP, representing non-convolutional and non-attention spatial paradigms~\cite{khan2022vitsurvey}.
Experiments across these diverse architectures robustly validate whether SPOFA can bridge extreme structural gaps and maintain stable distillation trajectories under disparate inductive biases~\cite{ofa2023neurips,liu2022crosskd,ma2024aligning}.

\subsubsection{Implementation Details.}
All models are implemented using PyTorch and trained under a unified experimental protocol to ensure strictly fair comparisons.
For optimization, we adopt distinct strategies tailored to the student architectures. CNN-based students are trained using the SGD optimizer with a momentum of $0.9$ and a weight decay of $2 \times 10^{-3}$, following a cosine learning rate decay schedule. In contrast, ViT and MLP-based students utilize the AdamW optimizer with a weight decay of $0.05$. A cosine learning rate schedule with a 20-epoch linear warmup is adopted to facilitate stable convergence.
To further regularize training and prevent overfitting in these data-hungry architectures, we employ gradient clipping with a maximum norm of $5.0$ and apply a strong augmentation strategy that combines AutoAugment, Mixup ($\alpha=0.8$), and CutMix ($\alpha=1.0$).
All hyperparameters remain consistent across all competing methods to isolate algorithmic effectiveness. For SPOFA, the momentum coefficient $\alpha$ of the MEMA scaler is set to $0.99$ by default.
Crucially, to ensure a rigorous and fair comparison, our training configurations strictly adhere to the experimental protocols established by the OFA baseline. We maintain identical training durations (300 epochs for all models on CIFAR-100; 100 epochs for CNN students and 300 epochs for others on ImageNet-1K). By eliminating confounding variables such as extended training schedules or specialized tricks, we ensure that the observed performance gains are entirely attributable to the inherent algorithmic advancements of the SPOFA framework.

\subsubsection{Comparison Methods.}
We compare SPOFA against three broad categories of state-of-the-art distillation methods:
(1) Response-based KD: KD~\cite{hinton2015distilling}, DKD~\cite{zhao2022decoupled}, and DIST~\cite{huang2022knowledge}.
(2) Feature-based KD: FitNet~\cite{romero2014fitnets}, RKD~\cite{park2019relational}, and CRD~\cite{tian2019contrastive}.
(3) Heterogeneous KD: The strong baseline OFA~\cite{ofa2023neurips} (NeurIPS 2023) and the recent state-of-the-art method PAT~\cite{lin2025perspective} (ICCV 2025). 
These comparisons comprehensively demonstrate SPOFA's superiority in handling spatial norm discrepancies and temporal gradient conflicts compared to both traditional and specialized heterogeneous distillation approaches.

% =========================================================
\subsection{Main Results on CIFAR-100}
Table~\ref{tab:cifar100} summarizes the Top-1 classification accuracy across 12 highly heterogeneous teacher-student pairs on the CIFAR-100 benchmark. Thanks to the spatial structural decoupling and the temporal MEMA-guided trajectory rectification, SPOFA demonstrates exceptionally robust performance. Notably, SPOFA outperforms the strong heterogeneous baseline OFA~\cite{ofa2023neurips} in 11 out of 12 settings. Furthermore, while maintaining a strictly lightweight and efficient training footprint (introducing virtually zero extra computational overhead), SPOFA achieves strictly superior results to the recent computationally heavy SOTA method PAT~\cite{lin2025perspective} in 9 out of 12 settings. Even in the few cases where PAT marginally leads, SPOFA offers a highly favorable trade-off between distillation performance and training efficiency.

\begin{table*}[!t]
\caption{CIFAR-100 Top-1 Test Results (\%). Results are the average of 3 independent trials. The best values are in \textbf{bold}, and the second best are \underline{underlined}.}
\label{tab:cifar100}
\centering
\resizebox{\textwidth}{!}{
\begin{tabular}{|l|l|c|c|c|c|c|c|c|c|c|c|c|}
\hline
\multirow{2}{*}{\textbf{Teacher}} & \multirow{2}{*}{\textbf{Student}} & \multicolumn{2}{c|}{\textbf{From Scratch}} & \multicolumn{3}{c|}{\textbf{Feature-based}} & \multicolumn{3}{c|}{\textbf{Response-based}} & \multicolumn{3}{c|}{\textbf{Heterogeneous-KD}} \\
\cline{3-13}
 & & T. & S. & FitNet & RKD & CRD & KD & DKD & DIST & OFA & PAT & \textbf{SPOFA}\\
 & & & & \tiny{ICLR'15} & \tiny{ICCV'19} & \tiny{ICLR'20} & \tiny{ArXiv'15} & \tiny{CVPR'22} & \tiny{NeurIPS'22} & \tiny{NeurIPS'23} & \tiny{ICCV'25} & \tiny{Ours} \\
\hline
\multicolumn{13}{|l|}{\textbf{CNN-based students}} \\ \hline
Swin-T & ResNet18 & 89.26 & 74.01 & 78.87 & 74.11 & 77.63 & 78.74 & 80.26 & 77.75 & 80.54 & \underline{81.22} & \textbf{82.39}\\
ViT-S & ResNet18 & 92.04 & 74.01 & 77.71 & 73.72 & 76.60 & 77.26 & 78.10 & 76.49 & 80.15 & \underline{80.11} & \textbf{81.26}\\
Mixer-B/16 & ResNet18 & 87.29 & 74.01 & 77.15 & 73.75 & 76.42 & 77.79 & 78.67 & 76.36 & 79.39 & \underline{80.07} & \textbf{81.89}\\
Swin-T & MobileNetV2 & 89.26 & 73.68 & 74.28 & 69.00 & 79.80 & 74.68 & 71.07 & 72.89 & \underline{80.98} & 78.78 & \textbf{81.25}\\
ViT-S & MobileNetV2 & 92.04 & 73.68 & 73.54 & 68.46 & 78.14 & 72.77 & 69.80 & 72.54 & 78.45 & \underline{78.87} & \textbf{80.01}\\
Mixer-B/16 & MobileNetV2 & 87.29 & 73.68 & 73.78 & 68.95 & 78.15 & 73.33 & 70.20 & 73.26 & \underline{78.78} & 78.62 & \textbf{80.96}\\
\hline
\multicolumn{13}{|l|}{\textbf{ViT-based students}} \\ \hline
ConvNeXt-T & DeiT-T & 88.41 & 68.00 & 60.78 & 69.79 & 65.94 & 72.99 & 74.60 & 73.55 & 75.76 & \textbf{79.59} & \underline{78.91}\\
Mixer-B/16 & DeiT-T & 87.29 & 68.00 & 71.05 & 69.89 & 71.36 & 73.44 & 71.67 & 73.90 & 73.90 & \underline{74.66} & \textbf{77.02}\\
ConvNeXt-T & Swin-P & 88.41 & 72.63 & 24.06 & 71.73 & 67.09 & 76.44 & 76.80 & 76.41 & 78.32 & \underline{80.74} & \textbf{81.17}\\
Mixer-B/16 & Swin-P & 87.29 & 72.63 & 75.20 & 70.82 & 75.93 & 76.39 & 75.85 & 76.65 & \underline{78.93} & 78.44 & \textbf{80.41}\\
\hline
\multicolumn{13}{|l|}{\textbf{MLP-based students}} \\ \hline
ConvNeXt-T & ResMLP-S12 & 88.41 & 66.56 & 45.47 & 65.82 & 63.35 & 72.25 & 73.22 & 71.93 & 81.22 & \textbf{83.50} & \underline{82.32}\\
Swin-T & ResMLP-S12 & 89.26 & 66.56 & 63.12 & 64.66 & 61.72 & 71.89 & 72.82 & 11.05 & \underline{80.63} & \textbf{80.94} & 79.81\\
\hline
\end{tabular}
}
\end{table*}

\noindent\textbf{Analysis of Results.}
The experimental results on CIFAR-100 demonstrate that SPOFA consistently outperforms existing baselines across diverse architectural pairs, validating the efficacy of our spatial decoupling and temporal trajectory rectification strategies.

First, regarding the robustness of CNN students, SPOFA proves exceptionally effective in transferring knowledge from global-attention Transformers or MLPs to local-bias CNNs. This scenario typically suffers from massive feature norm discrepancies. However, SPOFA completely dominates this track (winning 6 out of 6 settings). For instance, when distilling from Swin-T to ResNet18, SPOFA reaches a top-1 accuracy of 82.39\%, significantly outperforming the recent SOTA PAT (81.22\%) by +1.17\% and the strong baseline OFA (80.54\%) by +1.85\%. Similarly, in the Mixer-B/16 $\to$ MobileNetV2 setting, SPOFA outperforms OFA by a striking margin of +2.18\% (80.96\% vs. 78.78\%). These substantial gains confirm that our LayerNorm-based decoupling projector successfully suppresses arbitrary magnitude scaling, strictly decoupling feature norm from direction to break representational inertia. This is visually corroborated by the feature maps in Fig.~\ref{fig:feature_analysis}, where SPOFA demonstrates a sharper, more concentrated semantic alignment on target objects compared to OFA.

Second, for architectures with weak inductive biases (Transformers and MLPs), SPOFA also exhibits highly competitive improvements, preventing optimization instability. Notably, in the distillation from Mixer-B/16 to DeiT-T, SPOFA achieves 77.02\%, surpassing PAT (74.66\%) by a massive +2.36\%. In the ConvNeXt-T $\to$ Swin-P setting, SPOFA continues to outperform PAT by +0.43\% (81.17\% vs. 80.74\%). These architectures are highly susceptible to gradient variance and optimization instability under heterogeneous noise. The results verify that our Momentum-driven EMA (MEMA) scaler establishes a robust historical baseline, effectively penalizing conflicting gradient signals to guarantee monotonic and stable convergence.

Finally, concerning the efficiency-accuracy trade-off, a comparative analysis reveals the definitive superiority of our framework. We acknowledge that PAT demonstrates a marginal performance advantage on 3 specific non-CNN student settings (e.g., ConvNeXt-T $\to$ ResMLP-S12, 83.50\% vs. 82.32\%). However, to bridge the heterogeneous gap, PAT relies on heavy auxiliary attention modules and necessitates dual teacher forwarding, resulting in approximately 3$\times$ the training latency and significantly higher memory footprint. In stark contrast, SPOFA achieves SOTA-level results on 9 out of 12 settings while requiring zero extra parameters during inference and introducing virtually zero computational overhead during training. This highly favorable trade-off makes SPOFA a significantly more practical and elegant solution for real-world, resource-constrained deployments.

\subsection{Distillation Results on ImageNet-1K}
To rigorously evaluate the scalability and robustness of SPOFA under large-scale data regimes, we strategically select four highly representative heterogeneous teacher-student settings on the ImageNet-1K benchmark. Rather than exhaustive enumeration, these specific pairs are deliberately chosen because they encompass the most extreme and challenging architectural gaps—spanning from MLP-to-CNN (e.g., Mixer-B/16 to MobileNetV2) and CNN-to-Transformer (e.g., ConvNeXt-T to DeiT-T) paradigms. By testing the framework against these fundamental shifts in inductive biases, we comprehensively validate its generalized applicability and stability. The results are presented in Table~\ref{tab:imagenet1k}. With the significant increase in data scale and diversity, SPOFA exhibits robust scalability and highly competitive performance.

Specifically, SPOFA attains state-of-the-art performance in three out of four evaluated pairs and secures the second-best position in the remaining one, outperforming the strong OFA baseline in all evaluated settings. For instance, we achieve 72.24\% on MobileNetV2 and 74.46\% on DeiT-T when distilling from Mixer-B/16 and ConvNeXt-T, respectively. Notably, in the difficult Mixer-B/16 $\to$ MobileNetV2 (MLP-to-CNN) setting, SPOFA surpasses both OFA (72.12\%) and the recent SOTA method PAT (72.22\%), suggesting its scalability to large-scale settings on a massive dataset.

Previous feature-based methods (e.g., FitNet, CRD) achieve distinctly inferior performance as they force direct global feature matching, remaining vulnerable to the immense feature norm discrepancies inherent in heterogeneous architectures. While the OFA framework mitigates dimensional mismatch via a unified projection, it still suffers from representational inertia due to unbounded feature magnitudes and memoryless temporal optimization.

In contrast, SPOFA's performance confirms that structurally decoupling feature magnitude via a LayerNorm-based decoupling projector, combined with explicitly rectifying conflicting gradients via the Momentum-driven EMA (MEMA) scaler, ensures highly efficient and stable knowledge transfer. Compared to PAT, which relies on heavy auxiliary attention modules and necessitates dual teacher forwarding, SPOFA achieves superior results in 3 out of 4 settings (e.g., consistently outperforming PAT on both Mixer-B/16 $\to$ MobileNetV2 and ConvNeXt-T $\to$ DeiT-T) using a strictly lightweight pipeline. Even when marginally trailing behind PAT in an isolated case (e.g., Swin-T $\to$ ResMLP-S12), SPOFA offers a vastly superior and practical trade-off between distillation performance and training cost. 

Comparing the results on ImageNet-1K with those on CIFAR-100, we observe that while the relative performance margins are narrower due to the increased complexity of the large-scale dataset, SPOFA exhibits superior stability and consistency. This confirms that our fine-grained distillation mechanism scales effectively to high-complexity classification tasks, providing robust improvements across heterogeneous architectures—including CNNs, Transformers, and MLPs.

\begin{table*}[!t]
\caption{Top-1 accuracy (\%) on ImageNet-1K. The best results are indicated in \textbf{bold}, while the second best are \underline{underlined}.}
\label{tab:imagenet1k}
\centering
\resizebox{\textwidth}{!}{
\begin{tabular}{|l|l|c|c|c|c|c|c|c|c|c|c|c|}
\hline
\multirow{2}{*}{\textbf{Teacher}} & \multirow{2}{*}{\textbf{Student}} & \multicolumn{2}{c|}{\textbf{From Scratch}} & \multicolumn{3}{c|}{\textbf{Feature-based}} & \multicolumn{3}{c|}{\textbf{Response-based}} & \multicolumn{3}{c|}{\textbf{Heterogeneous-KD}} \\
\cline{3-13}
 & & T. & S. & FitNet & RKD & CRD & KD & DKD & DIST & OFA & PAT & \textbf{SPOFA} \\
 & & & & \tiny{ICLR'15} & \tiny{ICCV'19} & \tiny{ICLR'20} & \tiny{ArXiv'15} & \tiny{CVPR'22} & \tiny{NeurIPS'22} & \tiny{NeurIPS'23} & \tiny{ICCV'25} & \tiny{Ours} \\
\hline
\multicolumn{13}{|l|}{\textbf{CNN-based students}} \\ \hline
Swin-T & ResNet18 & 81.38 & 69.75 & 71.18 & 68.89 & 69.09 & 71.14 & 71.10 & 70.91 & \underline{71.85} & 71.54& \textbf{71.92}\\
Mixer-B/16 & MobileNetV2 & 76.62 & 68.87 & 71.59 & 69.86 & 68.89 & 71.92 & 70.93 & 71.74 & 72.12 & \underline{72.22} & \textbf{72.24}\\
\hline
\multicolumn{13}{|l|}{\textbf{ViT-based students}} \\ \hline
ConvNeXt-T & DeiT-T & 82.05 & 72.17 & 70.45 & 71.47 & 69.18 & 74.00 & 73.95 & 74.07 & 74.41 & \underline{74.44} & \textbf{74.46}\\
\hline
\multicolumn{13}{|l|}{\textbf{MLP-based students}} \\ \hline
Swin-T & ResMLP-S12 & 81.38 & 76.65 & 76.48 & 75.10 & 73.40 & 76.67 & 76.99 & 77.25 & 77.31 & \textbf{77.59} & \underline{77.41}\\ \hline
\end{tabular}
}
\end{table*}

\subsection{Efficiency Analysis}
\label{sec:efficiency}
Following the performance comparison, to further validate the efficiency advantages of SPOFA mentioned above, we compare the additional trainable parameters required by different frameworks during the training phase in Table~\ref{tab:params_comparison}.

While the OFA framework maintains a relatively low parameter count, it suffers from rigid optimization and representational inertia. The SOTA method PAT, although effective, introduces massive computational overhead due to its heavy auxiliary attention mechanisms and dual-teacher forwarding. For instance, in the Mixer-B/16 $\to$ ResNet18 setting, PAT requires a staggering 87.82M extra parameters, which is computationally prohibitive for resource-constrained scenarios.

In stark contrast, SPOFA achieves an optimal efficiency-accuracy trade-off. By leveraging a minimalist LayerNorm-based decoupling projector and a highly efficient MEMA scaler, SPOFA introduces a virtually negligible number of parameters (i.e., merely the channel-wise affine scaling parameters for normalization). Consequently, SPOFA maintains a minimal parameter footprint that is practically identical to the lightweight OFA baseline. For instance, in the Swin-T $\to$ ResNet18 setting, SPOFA remains at exactly 1.63M. Even in cases where it slightly exceeds OFA (e.g., Mixer-B/16 $\to$ MobileNetV2), the increment is a marginal +0.01M (2.54M vs. 2.53M), completely preserving the model's lightweight nature.

Crucially, for complex ViT-based students (e.g., ConvNeXt-T $\to$ DeiT-T), SPOFA strictly caps the extra parameters at 3.78M, which is almost 4$\times$ fewer than PAT (14.48M) while achieving superior accuracy. This confirms that SPOFA is not only highly accurate but also strictly minimalist and resource-efficient, effectively bypassing the massive overhead typical of recent heterogeneous KD advancements. Most importantly, it is worth noting that for all evaluated methods, the auxiliary projection branches and dynamic scalers are safely discarded post-training. This results in an identical deployed student size (e.g., 11.69M parameters for a standard ResNet-18) with strictly zero extra inference cost across all approaches. Because SPOFA achieves state-of-the-art accuracy under this identical deployment budget, it establishes a new standard for practical and high-performance heterogeneous distillation.

\begin{table}[t]
\caption{Comparison of extra trainable parameters (in Millions) required by OFA, PAT, and our method (SPOFA) during the training phase. Note that for all evaluated methods, the auxiliary modules are discarded after training, resulting in an identical deployed student size (e.g., 11.69M parameters for ResNet-18) with zero extra inference cost. Bold indicates the best results, and \underline{underlined} indicates the second best.}
\label{tab:params_comparison}
\centering
\resizebox{\linewidth}{!}{
\begin{tabular}{|l|l|c|c|c|}
\hline
\multirow{2}{*}{\textbf{Teacher}} & \multirow{2}{*}{\textbf{Student}} & \multicolumn{3}{c|}{\textbf{Extra Params (M)}} \\
\cline{3-5}
 & & OFA & PAT & \textbf{SPOFA} \\
\hline
\multicolumn{5}{|l|}{\textit{\textbf{CNN-based students}}} \\ \hline
Swin-T     & ResNet18    & \textbf{1.63} & 38.52 & \textbf{1.63}\\
ViT-S      & ResNet18    & \textbf{1.19} & 32.57 & \underline{1.20}\\
Mixer-B/16 & ResNet18    & \textbf{1.63} & 87.82 & \textbf{1.63}\\
Swin-T     & MobileNetV2 & \textbf{2.53} & 32.36 & \underline{2.54}\\
ViT-S      & MobileNetV2 & \textbf{2.53} & 26.41 & \underline{2.54}\\
Mixer-B/16 & MobileNetV2 & \textbf{2.53} & 81.66 & \underline{2.54}\\
\hline
\multicolumn{5}{|l|}{\textit{\textbf{ViT-based students}}} \\ \hline
ConvNeXt-T & DeiT-T & \textbf{3.78}  & 14.48 & \textbf{3.78}\\
Mixer-B/16 & DeiT-T & \textbf{3.78}  & 81.33 & \textbf{3.78}\\
ConvNeXt-T & Swin-P & \textbf{14.86} & 17.77 & \textbf{14.86}\\
Mixer-B/16 & Swin-P & \textbf{14.86} & 84.63 & \textbf{14.86}\\
\hline
\multicolumn{5}{|l|}{\textit{\textbf{MLP-based students}}} \\ \hline
ConvNeXt-T & ResMLP-S12 & \textbf{14.94} & 20.77 & \textbf{14.94}\\
Swin-T     & ResMLP-S12 & \textbf{14.94} & 38.33 & \textbf{14.94}\\
\hline
\end{tabular}
}
\end{table}

\subsection{Ablation Study}
\label{sec:ablation}
To comprehensively evaluate the independent and synergistic contributions of the proposed components, we conducted a step-by-step ablation study. We selected the challenging heterogeneous pair Swin-T $\to$ ResNet18 on CIFAR-100 as the representative setting. This pair distills knowledge from a global-attention Transformer teacher to a local-bias CNN student, representing a scenario where feature geometry misalignment is exceptionally severe, and gradient conflicts are prominent.

As detailed in Table~\ref{tab:component_ablation}, we decompose the SPOFA framework into four progressive configurations to isolate the specific effects of the LayerNorm-based decoupling projector and the Momentum-driven EMA (MEMA) Scaler.

\textbf{Variant 1: Baseline (OFA).} The original OFA framework is employed. It uses a standard non-linear projection (e.g., GELU/Identity) without structural magnitude constraints, and applies a fixed, static weight for the distillation loss. This serves as our lower bound for performance comparison (80.54\%).

\textbf{Variant 2: Baseline + Decoupling.} We intervene in the representational space by replacing the standard projection activation with our LayerNorm-based design. The distillation weight remains fixed. This explicitly decouples feature magnitude from direction, suppressing representational inertia and yielding a direct performance boost to 81.19\% (+0.65\%).

\textbf{Variant 3: Baseline + MEMA.} We revert to the standard projector but intervene in the optimization trajectory by introducing the Momentum-driven EMA (MEMA) Scaler. This replaces the fixed loss weight with a momentum-based dynamic scaling that actively penalizes conflicting gradients. This trajectory rectification alone improves the accuracy to 81.80\% (+1.26\%).

\textbf{Variant 4: SPOFA (Full).} This is our complete proposal, seamlessly integrating both geometry decoupling (LayerNorm) and gradient rectification (MEMA). The combination successfully bridges both structural and optimization gaps, achieving the peak performance of 82.39\% (+1.85\% over the baseline).

\begin{table}[t]
\centering
\caption{ABLATION STUDY ON CIFAR-100 WITH SWIN-T - RESNET18 AS THE TEACHER-STUDENT PAIR.}
\label{tab:component_ablation}
\begin{tabular}{|l|c|c|c|c|}
\hline 
\multirow{2}{*}{\textbf{Method Variant}} & \multicolumn{2}{c|}{\textbf{Components}} & \multirow{2}{*}{\textbf{Acc (\%)}} & \multirow{2}{*}{\textbf{Gain}} \\
\cline{2-3} 
& \textbf{LayerNorm} & \textbf{MEMA} & & \\
\hline
Baseline (OFA) & $\times$ & $\times$ & 80.54 & 0.00 \\ 
\hline
+ Decoupling & \checkmark & $\times$ & 81.19 & +0.65 \\
\hline
+ MEMA & $\times$ & \checkmark & 81.80 & +1.26 \\
\hline
\textbf{SPOFA (Full)} & \textbf{\checkmark} & \textbf{\checkmark} & \textbf{82.39} & \textbf{+1.85} \\
\hline
\end{tabular}
\end{table}

To further validate our specific architectural choice for the decoupling projector, we comprehensively compare LayerNorm (LN) against other mainstream normalization techniques, including BatchNorm (BN), GroupNorm (GN), and a baseline without normalization (Identity). For a fair comparison, to strictly isolate the effect of spatial decoupling, all evaluated variants are jointly equipped with the temporal MEMA scaler and evaluated on the highly heterogeneous Swin-T $\to$ ResNet18 setting.

As reported in Table \ref{tab:norm_ablation}, omitting normalization (Identity) yields an accuracy of 81.80\%. When applying the standard BatchNorm, the performance marginally improves to 81.87\% (a mere +0.07\% gain). This negligible improvement highlights a fundamental limitation of BN in heterogeneous distillation scenarios: because heterogeneous feature distributions exhibit massive variance across different samples, forcing batch-wise statistical calculations disrupts the intrinsic manifold of individual features.

Replacing BN with GroupNorm mitigates the cross-sample contamination issue by performing intra-sample normalization, yielding a better accuracy of 82.13\% (+0.33\% gain over the baseline). However, GN still yields suboptimal results compared to LN. GN restricts holistic feature alignment by imposing artificial boundaries (groups) across the channel dimension. This heuristic grouping disrupts the global semantic topologies that are inherently coupled across all channels in Transformer representations.

In stark contrast, LayerNorm normalizes activations across the entire channel dimension for each spatial token independently \cite{li2025unseen}. This design perfectly addresses the severe norm disparities that induce representational inertia in heterogeneous feature spaces \cite{li2025unseen}. By strictly bounding the feature magnitude of individual tokens and preserving the global channel-wise context, LN yields a significant performance leap to 82.39\%, confirming its absolute necessity for optimal spatial decoupling.

\begin{table}[h]
\centering
\caption{Impact of different normalization strategies in the spatial projector (Swin-T $\to$ ResNet18). To strictly isolate the effect of spatial decoupling, all evaluated variants are jointly equipped with the temporal MEMA scaler.}
\label{tab:norm_ablation}
\begin{tabular}{|l|c|c|}
\hline
\textbf{Variant Configuration} & \textbf{Top-1 Acc (\%)} & \textbf{$\Delta$ vs. None} \\
\hline
None (Identity) & 81.80 & 0.00 \\
\hline
BatchNorm (BN)   & 81.87 & +0.07 \\
\hline
GroupNorm (GN) & 82.13 & +0.33 \\
\hline
LayerNorm (LN)  & \textbf{82.39} & \textbf{+0.59} \\
\hline
\end{tabular}
\vspace{-15pt}
\end{table}

%敏感性分析实验
\subsection{Hyperparameter Sensitivity Analysis}
\label{sec:sensitivity}

To further demonstrate the robustness of the SPOFA framework, we conduct a sensitivity analysis on the two critical hyperparameters introduced in the MEMA scaler: the momentum coefficient $\alpha$ and the penalty decay factor $\lambda$. We evaluate their impact on the highly heterogeneous Swin-T $\to$ ResNet18 setting on CIFAR-100.

Impact of the Momentum Coefficient ($\alpha$). The momentum $\alpha$ dictates the memory depth of the historical baseline. As shown in Figure \ref{fig:sensitivity} (Left), we vary $\alpha \in \{0.85, 0.90, 0.95, 0.99\}$. While a slight performance dip is observed at $\alpha=0.95$, the system generally exhibits strong robustness against this temporal parameter. Setting $\alpha = 0.99$ yields the optimal accuracy (82.39\%), indicating that a relatively long and stable historical memory is beneficial for tracking true gradient trajectories rather than reacting to instantaneous mini-batch noise. Notably, SPOFA maintains high performance ($>81.9\%$) across all tested values, proving it does not rely on heavily fine-tuned ``magic numbers''.

Impact of the Penalty Decay Factor ($\lambda$). The factor $\lambda$ controls the sensitivity of the weight penalty when a gradient conflict is detected. We uniformly test $\lambda \in \{0.05, 0.1, 0.2, 0.3, 0.4\}$. As illustrated in Figure \ref{fig:sensitivity} (Right), the performance exhibits a clear inverted-U trend. A moderate penalty ($\lambda = 0.2$) yields the highest accuracy (82.39\%). A highly aggressive penalty (e.g., $\lambda = 0.4$) over-suppresses the distillation signals even during minor expected fluctuations, while a weak penalty ($\lambda = 0.05$) fails to adequately rectify divergent gradients. Nonetheless, the overall variance remains minimal and consistently outperforms the baseline, further confirming the inherent stability of our momentum-guided rectification mechanism.

\begin{figure}[!t]
    \centering
    \includegraphics[width=\linewidth]{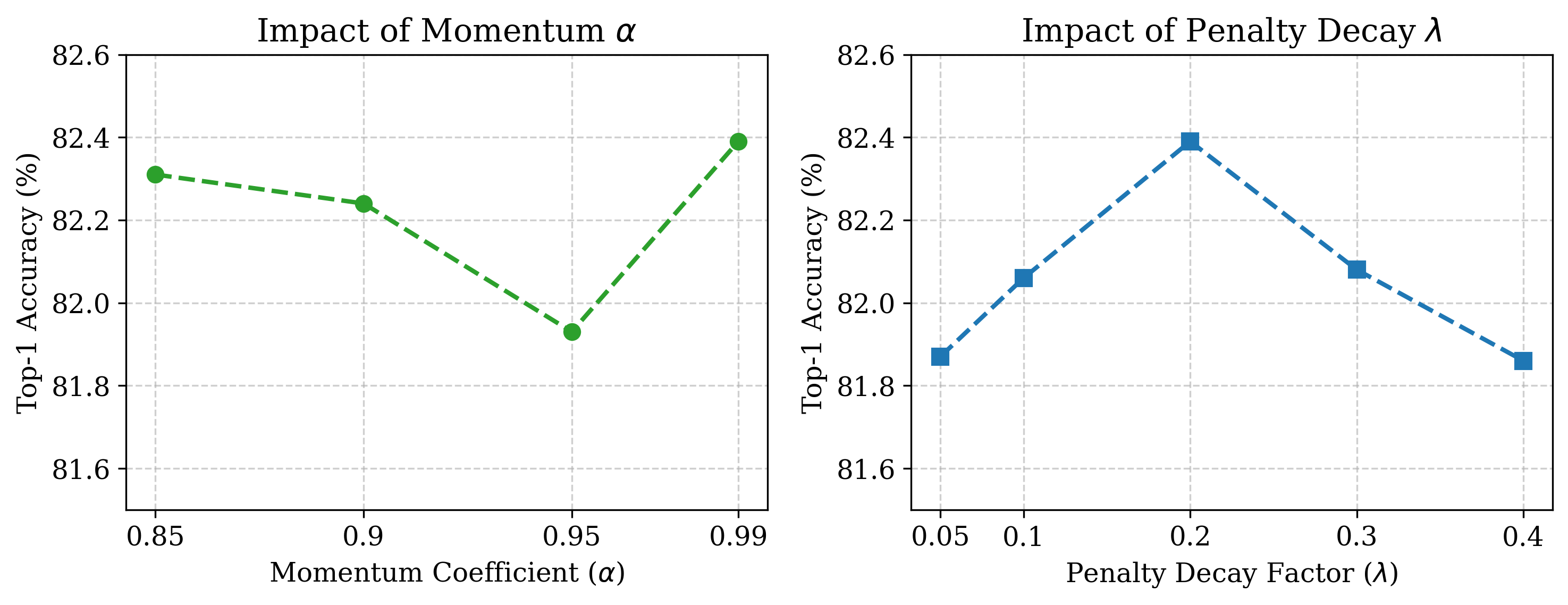} 
    \caption{Hyperparameter sensitivity analysis of the MEMA scaler on CIFAR-100 (Swin-T $\to$ ResNet18). We ablate the momentum coefficient $\alpha$ (Left) and the penalty decay factor $\lambda$ (Right). SPOFA demonstrates robust and consistent improvements over the baseline across a wide range of parameter settings.}
    \label{fig:sensitivity}
    \vspace{-10pt}
\end{figure}

%\subsection{Visualization and Convergence Analysis}
%To provide intuitive insights into the inner workings of SPOFA, we conduct a comprehensive case study on a highly representative heterogeneous pair: Swin-T (Teacher) to ResNet18 (Student). This pair encapsulates the most severe architectural clash: bridging the global self-attention mechanism of Transformers with the local receptive fields of CNNs. We systematically dissect the distillation dynamics of this specific pair across both feature and gradient levels. At the feature level (Structural Gap), the global receptive field of Swin-T intrinsically generates unbounded and massive feature magnitudes, whereas ResNet18 expects normalized, localized variance. Directly matching them induces the representational bottlenecks observed in our preliminary analysis. We visualize how SPOFA's decoupling mechanism resolves this structural gap by bounding the norm and facilitating deep semantic alignment. At the gradient level (Optimization Gap), forcing a CNN to optimize towards attention-based representations triggers severe gradient conflicts, as their inductive biases dictate fundamentally different descent directions. We track how the MEMA scaler prevents the resulting optimization instability, ensuring that the CNN student maintains a stable trajectory and learns holistic semantic topologies. 

\subsection{Visualization and Convergence Analysis}

To provide intuitive insights into the inner workings of SPOFA, we conduct a series of visualizations to verify how our method enhances teacher-student alignment and ensures the stability of the optimization trajectory.

1) Improved Semantic Alignment.
We first examine whether SPOFA effectively bridges the semantic gap at the output level. Figure~\ref{fig:logits_similarity} visualizes the distribution of the logits cosine similarity between the teacher and student models on the CIFAR-100 dataset.
Compared to the OFA baseline, SPOFA exhibits a significantly more compact distribution that is clearly shifted towards higher similarity values. 
This qualitative evidence confirms that our dual-intervention strategy successfully overcomes representational inertia. By maintaining a stable optimization path, SPOFA forces the student to achieve a much higher degree of semantic and distributional alignment with the teacher's predictions.

\begin{figure}[!t] 
    \centering
    \includegraphics[width=0.8\linewidth]{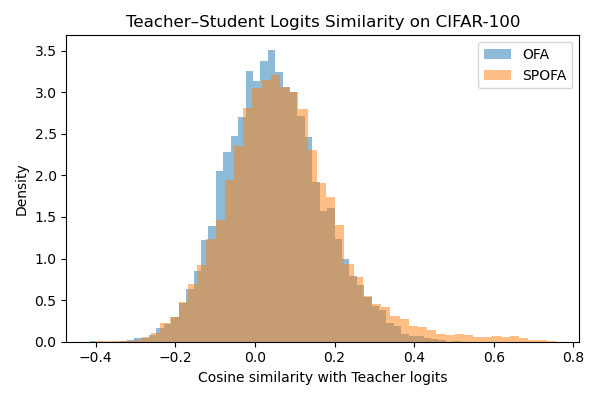} 
    \caption{Comparison of Teacher-Student logits cosine similarity. SPOFA (Orange) demonstrates a distinct shift towards higher similarity compared to the baseline OFA (Blue), indicating superior knowledge transfer. The visualization is conducted on CIFAR-100 with Swin-T - ResNet18 as the teacher-student pair.}
    \label{fig:logits_similarity}
    \vspace{-10pt}
\end{figure}

2) Qualitative Feature Visualization.
To further intuitively demonstrate the synergistic effect of our framework in overcoming both representational inertia and optimization instability, we visualize the feature attention maps of the last convolutional layer of the student network via Grad-CAM. As shown in Fig.~\ref{fig:grad_cam}, due to unmitigated gradient conflicts and heterogeneous norm discrepancies, the optimization trajectory of the baseline OFA-KD frequently drifts. This causes its attention to diverge or collapse into local, sub-optimal discriminative regions (e.g., merely focusing on a local patch of the \textit{Tulip} or erroneously constraining to the lower wall of the \textit{House}). In contrast, by leveraging the MEMA scaler to actively penalize conflicting gradients within the structurally decoupled space provided by the spatial projector, SPOFA-KD maintains a highly stable learning trajectory. Consequently, it generates much sharper and structurally complete heatmaps, comprehensively covering the core semantic regions of the target objects (e.g., extending to the roof of the house and encapsulating the entire beaver's profile). This visually confirms that the stable temporal guidance from MEMA successfully prevents optimization instability, guiding the student to learn the correct holistic semantic topology.

\begin{figure}[!t]
    \centering
    \includegraphics[width=\columnwidth]{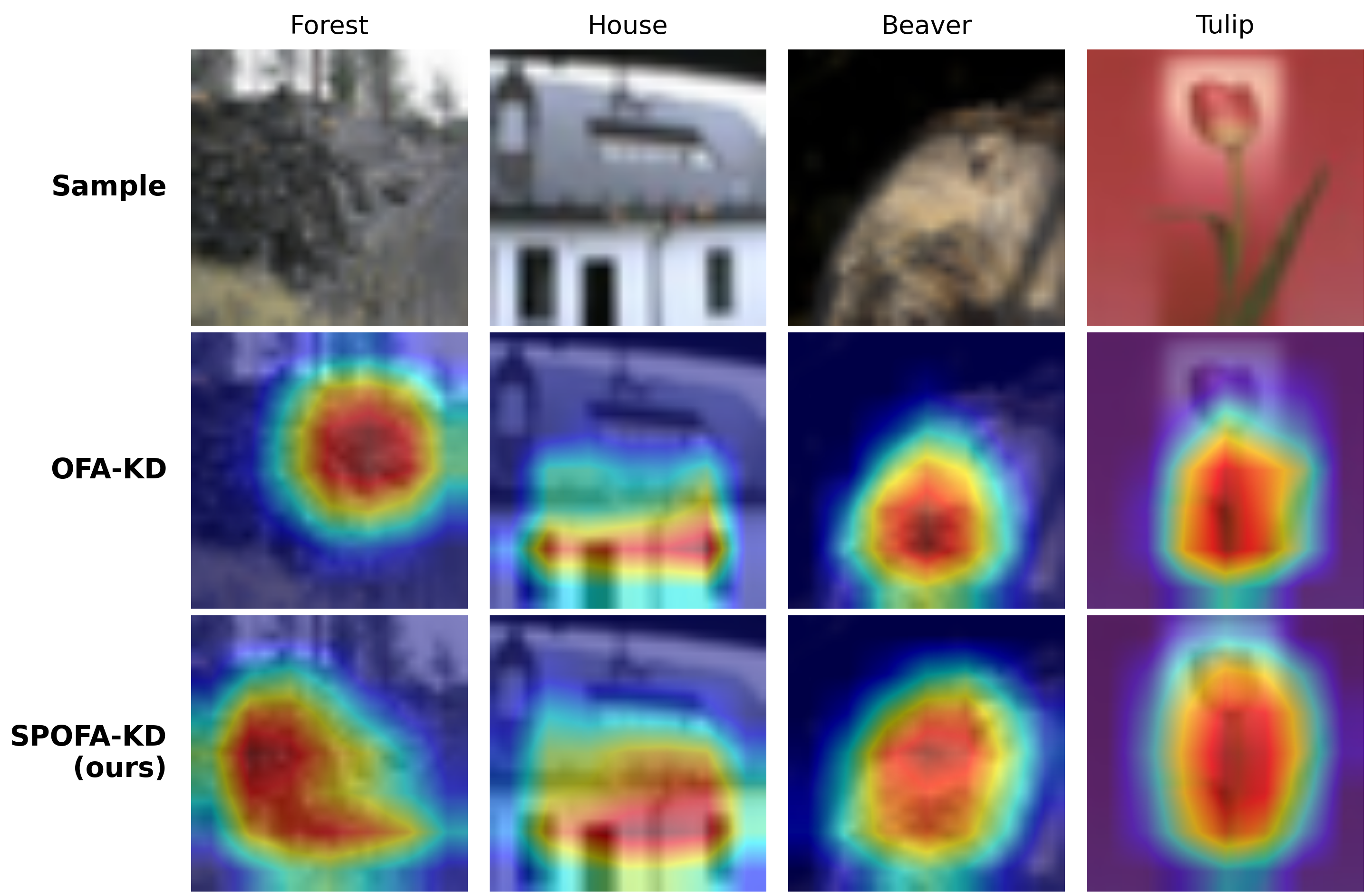} 
    \caption{Qualitative comparison of feature attention maps (Grad-CAM) on the CIFAR-100 test set. Compared to the OFA-KD baseline, SPOFA-KD effectively eliminates representational inertia, leading to a much sharper and more comprehensive semantic focus on the holistic structure of the target objects.}
    \label{fig:grad_cam}
    \vspace{-10pt}
\end{figure}

3) Quantitative Feature Alignment via CKA.
To strictly quantify the representational similarity between the highly heterogeneous teacher and student, we employ Centered Kernel Alignment (CKA), a widely recognized metric for measuring the similarity of feature geometries across different network architectures. We calculate the Linear CKA scores across all four backbone stages using the test set.

\begin{figure}[!t]
    \centering
    \includegraphics[width=0.85\linewidth]{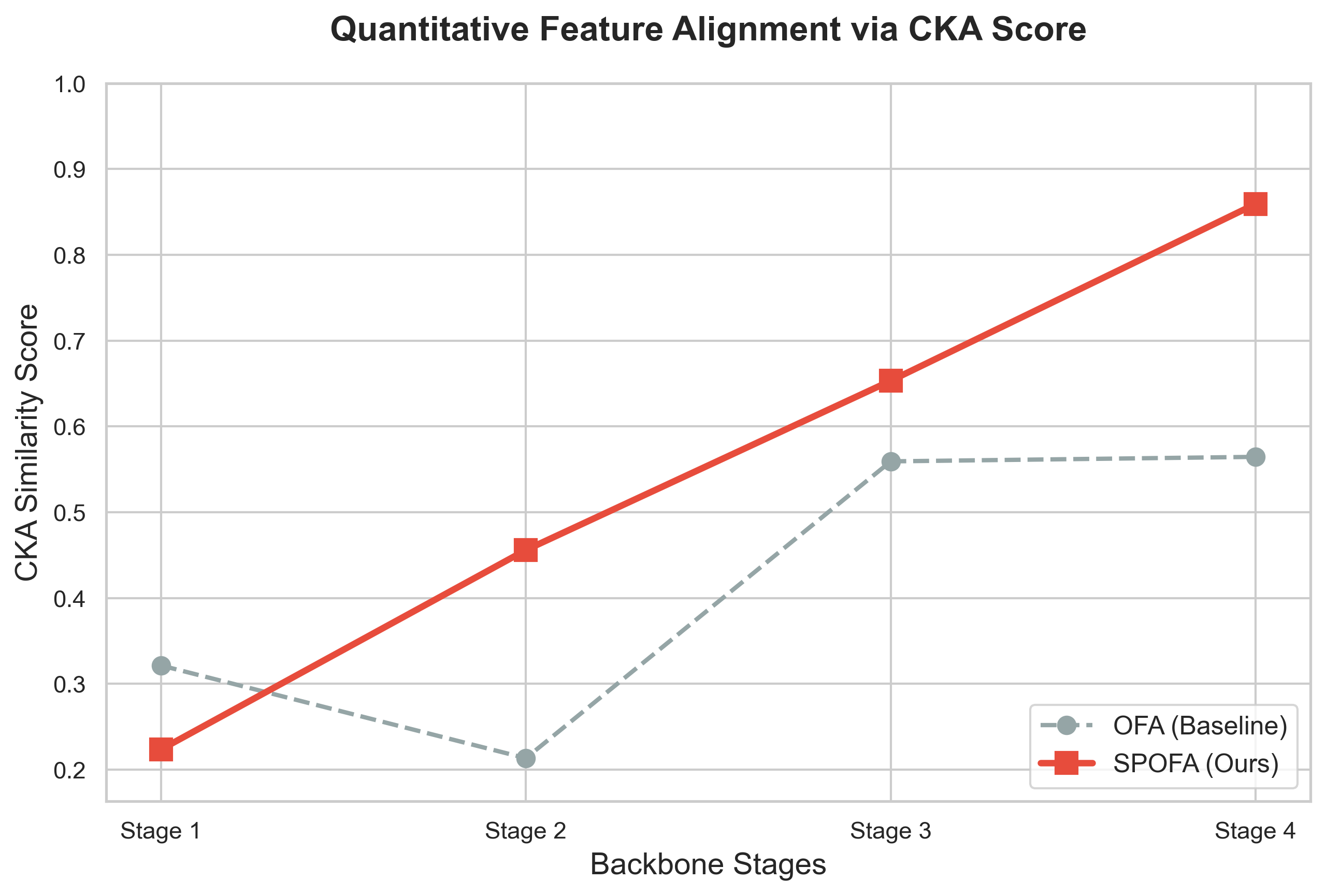}
    \caption{Quantitative analysis of feature alignment using Centered Kernel Alignment (CKA). SPOFA ensures a stable, monotonic increase in similarity, ultimately achieving a significantly higher alignment score (0.86) at the final stage compared to the OFA baseline (0.56). The visualization is conducted on CIFAR-100 with Swin-T - ResNet18 as the teacher-student pair.}
    \label{fig:cka_score}
    \vspace{-10pt}
\end{figure}

As depicted in Figure~\ref{fig:cka_score}, the standard OFA framework (grey dashed line) struggles significantly with the architectural gap. It suffers from a severe alignment collapse at the early-to-mid stages (e.g., dropping to 0.21 at Stage 2) and plateaus at a suboptimal similarity of 0.56 at the final stage. In stark contrast, SPOFA (red solid line) exhibits a healthy, monotonically increasing alignment trajectory. The incorporation of the MEMA scaler effectively prevents the mid-stage collapse. Furthermore, powered by the LayerNorm-based decoupling projector, SPOFA achieves a remarkably high CKA score of 0.86 at Stage 4. This massive gap (+0.30 over OFA) provides rigorous quantitative evidence that SPOFA achieves an unprecedented level of deep semantic alignment.

\begin{figure*}[!t]
    \centering
    \includegraphics[width=\linewidth]{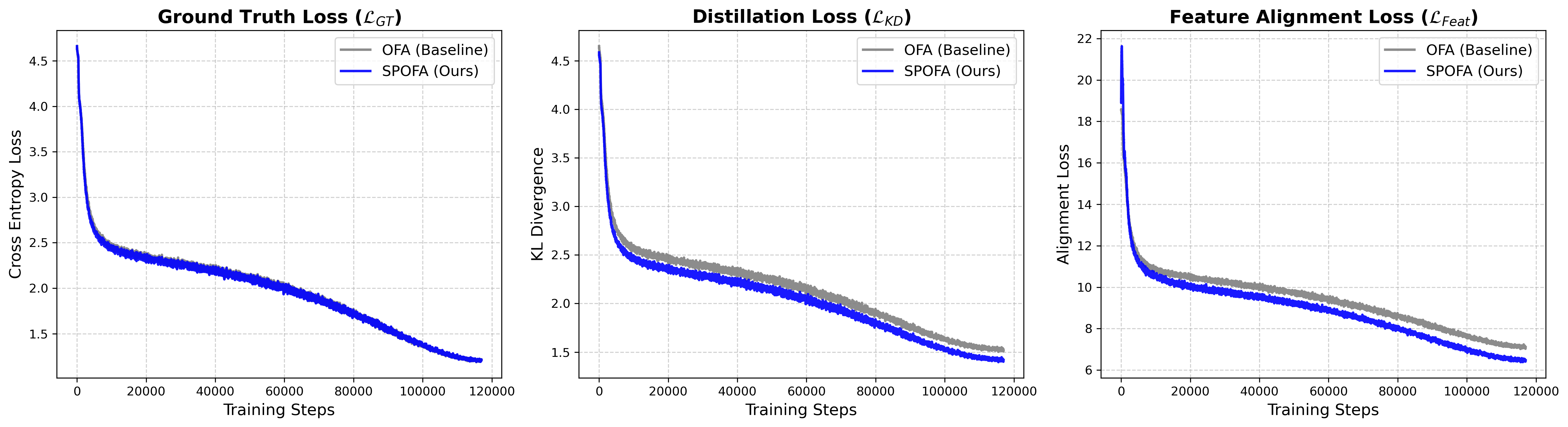}
    \caption{Convergence analysis of training losses over 120,000 steps. SPOFA exhibits accelerated and deeper convergence, particularly in the Distillation Loss ($\mathcal{L}_{KD}$) and Feature Alignment Loss ($\mathcal{L}_{Feat}$), validating the efficacy of LayerNorm and MEMA in stabilizing heterogeneous optimization. The visualization is conducted on CIFAR-100 with Swin-T - ResNet18 as the teacher-student pair.}
    \label{fig:loss_convergence}
\end{figure*}

4) Accelerated and Stabilized Optimization.
To explicitly verify the role of our spatial decoupling (LayerNorm) and temporal rectification (MEMA) mechanisms during training, we track the convergence trajectories of three primary loss components: Ground Truth Loss ($\mathcal{L}_{GT}$), Distillation Loss ($\mathcal{L}_{KD}$), and Feature Alignment Loss ($\mathcal{L}_{Feat}$).

As illustrated in Figure~\ref{fig:loss_convergence}, while both methods achieve comparable convergence on the basic ground truth task (left), SPOFA demonstrates a definitive superiority in the distillation dynamics:
\begin{itemize}
    \item Feature Alignment (Right): The $\mathcal{L}_{Feat}$ curve of SPOFA drops much more rapidly and maintains a strictly lower bound than OFA. This visually validates that replacing standard activations with our LayerNorm-based projector successfully eliminates the magnitude discrepancy, providing an inherently bounded and stable space for feature matching.
    \item Distillation Trajectory (Middle): In the response-based distillation phase ($\mathcal{L}_{KD}$), SPOFA achieves a smoother and deeper convergence. This confirms that the MEMA scaler effectively penalizes conflicting gradients and prevents the "optimization instability" often observed in heterogeneous distillation, ensuring steady descent.
\end{itemize}

These visualizations collectively validate the architectural design of SPOFA. The spatial LayerNorm projector effectively decouples feature magnitude from direction, as evidenced by the rapid convergence of $\mathcal{L}_{Feat}$ (Fig.~\ref{fig:loss_convergence}) and the significantly higher CKA scores across all stages (Fig.~\ref{fig:cka_score}). Meanwhile, the temporal MEMA scaler stabilizes the optimization trajectory, reflected in the smooth descent of $\mathcal{L}_{KD}$ (Fig.~\ref{fig:loss_convergence}) and the sharper, more complete Grad-CAM heatmaps (Fig.~\ref{fig:grad_cam}). Together, these components enable superior semantic alignment at the output level (Fig.~\ref{fig:logits_similarity}).

%结论部分的内容
\section{Conclusion}
In this paper, we presented SPOFA, a robust and highly efficient framework tailored for heterogeneous knowledge distillation. SPOFA addresses the highly coupled feature and optimization misalignments that severely hinder effective knowledge transfer between architecturally divergent models (e.g., CNNs, ViTs, and MLPs). At the core of our framework is a synergistic dual-intervention strategy. At the feature level, SPOFA introduces a minimalist LayerNorm-based decoupling projector that explicitly decouples feature magnitude from direction. This targeted structural alignment effectively suppresses representational inertia and bounds the feature matching space. At the gradient level, to prevent optimization instability caused by heterogeneous architectural biases, we incorporate a Momentum-driven EMA (MEMA) Scaler. By dynamically evaluating gradient consistency and penalizing conflicting optimization steps, MEMA ensures a highly stable and consistent training trajectory. Extensive experiments on CIFAR-100 and ImageNet-1K demonstrate that SPOFA consistently outperforms the strong OFA baseline and surpasses the recent state-of-the-art method PAT (ICCV 2025) across diverse heterogeneous pairs. Crucially, unlike PAT, which relies on heavy auxiliary attention mechanisms and massive computational overhead, SPOFA achieves these state-of-the-art gains with a virtually negligible parameter footprint during training and strictly zero extra inference cost, establishing a new standard for the accuracy-efficiency trade-off. In future work, we plan to extend the principles of SPOFA to more challenging dense prediction tasks, such as object detection and semantic segmentation, and further explore its theoretical potential for resolving norm discrepancies and gradient conflicts in multimodal alignment scenarios.

\section*{Acknowledgments}
This work was supported by the Key Project of Hunan Provincial Department of Education, China (Grant 23A0213), and the Research Fund for Introduced Talents of Central South University of Forestry and Technology (Project Number: 2022YJ006).

%参考文献！

\bibliographystyle{IEEEtran}
\bibliography{reference}

\vfill

\end{document}